\newcommand{\be}{\begin{equation}}
\newcommand{\ee}{\end{equation}}
\newcommand{\bea}{\begin{eqnarray}}
\newcommand{\eea}{\end{eqnarray}}
\newcommand{\bi}{\begin{itemize}}
\newcommand{\ei}{\end{itemize}}
\newcommand{\ben}{\begin{enumerate}}
\newcommand{\een}{\end{enumerate}}
\newcommand{\bef}{\begin{figure}[tbp]}
\newcommand{\enf}{\end{figure}}
\newcommand{\bt}{\begin{tabular}{lcllcl}}
\newcommand{\et}{\end{tabular}}
\newcommand{\bd}{\begin{description}}
\newcommand{\ed}{\end{description}}
\newcommand{\eref}[1]{(\ref{#1})}       
\newcommand{\dfn}{\stackrel{\triangle}{=}}  
\newcommand{\comb}[2]{\left ( \begin{array}{c}
 {#1} \\
 {#2} \end{array} \right )}
\newcommand{\Rspace}{\mathbb{R}}
\newcommand{\Regret}{{\cal{R}}}
\renewcommand{\t}{\theta}
\newcommand{\T}{\Theta}
\newcommand{\TT}{\Lambda}
\renewcommand{\S}{\mathcal{S}}
\newcommand{\grid}{\Psi}
\newcommand{\e}{\varepsilon}
\newcommand{\A}{\mathcal{A}}
\newcommand{\X}{\mathcal{X}}
\newcommand{\Y}{\mathcal{Y}}
\newcommand{\R}{{\mbox{Regret}}}
\title[Logistic Regression Regret]{Logistic Regression Regret:  What's the Catch?}
\begin{document}

\maketitle

\begin{abstract}
We address the problem of the achievable regret rates with online logistic regression.  We derive lower bounds with logarithmic
regret under  $L_1$, $L_2$, and $L_\infty$ constraints on the parameter values.  The bounds are dominated by
$d/2 \log T$, where $T$ is the horizon and $d$ is the dimensionality of the parameter space.  We
show their achievability for $d=o(T^{1/3})$ in all these cases with Bayesian methods, that achieve them up to a $d/2 \log d$ term.
Interesting different behaviors are shown for larger dimensionality.  Specifically, on the negative side,
 if $d = \Omega(\sqrt{T})$, any algorithm is
guaranteed regret of $\Omega(d \log T)$ (greater than $\Omega(\sqrt{T})$)
under $L_\infty$ constraints on the parameters (and the example features).
On the positive side, under $L_1$ constraints on the parameters,
there exist Bayesian algorithms that can achieve regret that is sub-linear in $d$ for 
the asymptotically larger values of $d$.  For $L_2$ constraints, it is shown that for large enough $d$, 
the regret remains linear in $d$ but no longer logarithmic in $T$.
Adapting the \emph{redundancy-capacity\/} theorem from information theory,
we demonstrate a principled methodology based on
grids of parameters to derive lower bounds.
Grids are also utilized to derive some upper bounds.  
Our results strengthen results by \cite{kakade05} and \cite{foster18} for upper bounds for this problem, introduce novel
lower bounds, and adapt a methodology that can be used to obtain such bounds for other
related problems.  They also give a novel characterization of the asymptotic behavior when the dimension of the parameter space is
allowed to grow with $T$.  They additionally establish connections to the information theory literature, demonstrating that the actual regret
for logistic regression
depends on the richness of the parameter class, where even within this problem, richer classes lead to greater regret.
\end{abstract}

\begin{keywords}
  Logistic Regression, online learning, Bayesian methods, convex optimization, regret, redundancy capacity theorem.
\end{keywords}

\section{Introduction}
\label{sec:introduction}

Logistic regression plays a significant role in many learning applications, where a set of parameters representing the effects of different
features on the outcome (label) is learned from a training data set with known \emph{labels}.
The learned parameters are then used to predict the true labels of, yet unseen, data examples.
Examples include predicting the probability some person carries some disease based on features that are, e.g., hereditary or
environmental; or predicting the click-through-rate of ads shown in online advertising.
Many applications may require to operate in the online learning (or online convex optimization) setting.  In this setting, an algorithm
consumes the data in rounds.  At round $t$, predictions can be based on all examples seen up to round $t-1$, including on their true labels
(but not on data beyond round $t-1$), to
predict the label of the example at round $t$.  

The performance of an online algorithm is measured by its \emph{regret}, which is defined as the extra loss it incurs beyond that of an algorithm
that is playing, at all rounds, some \emph{comparator\/} value $\t^* \in \T$, where $\T$ is a predefined space of possible values.
The values of the parameters $\t^*$ can be those that
minimize the cumulative loss over all rounds up to the horizon $T$.  While regret is defined
for the online setting, it is directly connected to the \emph{convergence rate}, which measures an expected loss on an unseen example
at round $T+1$, based on training on the first $T$ examples. 

{\bf Paper Outline:} In Section~\ref{sec:contrib}, we outline our contributions.  We present a summary of related work in
Section~\ref{sec:related}.  Section~\ref{sec:notedef} formulates the problem.  In Section~\ref{sec:background}, we frame and extend results from the literature, setting them to prove our results.
Section~\ref{sec:lb} describes regret lower bounds for any algorithm. Section~\ref{sec:ub} shows upper bounds that can be
achieved with \emph{Bayesian mixture\/} algorithms and apply to logistic regression when the \emph{feature vector\/}
$x_t$ is observed prior to predicting a label.

\section{Summary of Contributions and Methods}
\label{sec:contrib}

We consider several settings with a $d$-dimensional parameter space with some limit $B$ on some norm of the parameters.
Specifically,  $\T \dfn \{\t^* : \|\t^*\|_\rho \leq B \}$, 
for $\rho = 1, 2, \infty$.  Define $\gamma = B / (\log T)$ as the count of $\log T$ units constituting $B$.
We focus on the case in which
the norm of the \emph{example\/} (or, feature value vector) $x_t$ at $t$ is bounded in $L_\infty$, i.e., $|x_{t,i}| \leq 1, \forall i = 1,2, \ldots, d$,
(or $\|x_t\|_\infty \leq 1$).  (This setup generalizes the practical setup with binary features).
However,
with proper adjustments (which decrease the bounds), the results transform also to the more restrictive $\|x_t\|_2 \leq 1$.

Our contributions include:
\bi
 \vspace{-.3cm}
 \item Comprehensive characterization of the regret for the logistic regression problem, including the asymptotic behavior in the
 dimensionality $d$, showing regret bounds logarithmic in $T$ and linear in $d$ for lower regions of $d$.
 \vspace{-.3cm} 
 \item Novel bounds that lead to this characterization, especially, lower bounds showing limitations on regret in the different
 settings.
 \vspace{-.3cm} 
 \item Specific negative results that demonstrate that in cases such as $L_\infty$ constraints, for $d = \Omega(\sqrt{T})$, we
 are guaranteed regret rates of at least $\Omega (\sqrt{T} \log T)$.
 \vspace{-.3cm} 
 \item Specific positive results that demonstrate that for upper regions of $d$, there exist Bayesian algorithms with regret rates $o(d)$ (for $L_1$ constraints
 with $d = \Omega(B\sqrt{T})$), as well as regret rates that are linear in $d$, and no longer logarithmic in $T$ (for $L_2$ constraints with
 $d = \Omega (B^2 T)$).
 \vspace{-.3cm} 
 \item Adaptation of a principled methodology from the information theory literature, that allows derivation of lower bounds for this and
 related problems.
 \vspace{-.3cm}  
\ei
The sub-linear regret in $d$ for $L_1$ is very interesting for logistic regression because the dot product,
used for prediction, is a linear combination of the parameters, making $L_1$ constraints very realistic,
especially in sparse real-worlds problems that
have binary feature vectors (see, e.g., \cite{mcmahan13}). 

Our results characterize the behavior of the regret for the various regions of $d$.
For smaller $d$,
we show lower bounds of $(d/2) \log (T/d)$, $(d/2) \log (T/d^2)$, and $(d/2) \log (T/d^3)$ for the
cases where $\|\t^* \|_\rho \leq B$ and $\rho$ is $\infty, 2, 1$, respectively, and upper bounds of
$(d/2) \log (B^2 T)$, $(d/2) \log (B^2 T/d)$, and $(d/2) \log (B^2 T/d^2)$ for the respective norm constraints.
(An additional $d$ term in the denominator of the logarithm in all bounds applies to the setting in which $\|x_t\|_2 \leq 1$.)
The difference between the constraints on different norms illustrates that regret is a function
of the richness of $\T$.  The richer is $\T$ (e.g., $L_\infty$ constraints are richer than $L_2$, which are richer than $L_1$)
the greater is the regret.
As the dimension $d$ is allowed to grow, the bounds change when the denominator of the logarithmic term above
equals the numerator.  They lead to different regions of the different bounds, with different behavior in each region.
Table~\ref{tab:results} shows the different lower and upper bounds for different $d$ and different norm
constraints on the space $\T$, and summarizes the results in Theorems~\ref{th:binary_lb}-\ref{the:bayes_ub1}.
For simplicity of the table, we omitted the lower limit on $d$ for each row, but it should be understood as $\T(n)$ where the upper
limit of the previous row is $o(n)$ (for the first row in each block, the previous $n = 1$).  The lower bounds that are $\T(T^\alpha)$ should be understood
as $\T ( T^{\alpha(1-\e)})$ for 
some small $\e > 0$  which can be as small as $O(\log \log T / (\log T))$.  This is again omitted for simplicity.
For the setting in which $\|x_t\|_2 \leq 1$, the additional $d$ term in the denominators of the logarithm leads to earlier
transitions between regions of $d$, for all cases (as well as adding a $d/2$ upper regret region for $L_{\infty}$).
Table~\ref{tab:comp} compares results in this paper to previously reported results (described in more detail in Section~\ref{sec:related}).
We omit middle ranges of $d$ that are
in Table~\ref{tab:results}.  Some results in this paper are extended from \cite{kakade05}
and adapted to the setup $\|x_t\|_\infty \leq 1$.
A footnote marks these with a proper explanation.
For multi labels, we use $\t^{*(m)}$ to denote the $d$-dimensional projection of the parameter space for label $m$.
Results for $L_1$ and $L_\infty$ (that were not directly derived)
are generalized from results that were derived for $L_2$ and are described in the ``previous results'' column.
\vspace{-.5cm}
\begin{table*}[h]
        \caption{Summary of regret bounds}
  \label{tab:results}
  \centering
        \begin{small}

  \begin{tabular} {lp{2.5cm}p{3.5cm}p{3.5cm}}
    \toprule
     Norm Constraint &  \footnotemark[1]  Dimension $d$& \footnotemark[2] Lower Bound  & Upper Bound \\
    \midrule
      $L_1: \|\t^*\|_1 \leq B$ & 
       $o\left ((\gamma T)^{1/3} \right ) $ &  $\frac{d}{2} \log \frac{\gamma T}{d^3}$ & $\frac{d}{2} \log \frac{B^2 T}{d^2}$ \\
      & $o\left (B \sqrt{T} \right )$ & $\T \left ( (\gamma T)^{1/3}\right )$ & $\frac{d}{2} \log \frac{B^2 T}{d^2}$ \\
       & $\Omega \left (B \sqrt{T} \right )$  & $\T \left ( (\gamma T)^{1/3}\right )$ & $o(d)$ \\

    \midrule
      $L_2: \|\t^*\|_2 \leq B$ & $o\left (\sqrt{\gamma T}\right ) $ &  $\frac{d}{2} \log \frac{\gamma T}{d^2}$ & $\frac{d}{2} \log \frac{B^2 T}{d}$ \\
       & $o\left (B^2 T \right )$ & $\T \left ( \sqrt{\gamma T}\right )$ & $\frac{d}{2} \log \frac{B^2 T}{d}$ \\
       & $\Omega \left (B^2 T \right )$ & $\T \left ( \sqrt{\gamma T}\right )$ & $\frac{d}{2}$ \\

    \midrule
      $L_\infty: \|\t^*\|_\infty \leq B$ & 
        $o\left (\gamma T\right ) $ &  $\frac{d}{2} \log \frac{\gamma T}{d}$ & $\frac{d}{2} \log \left (B^2 T \right )$ \\
        & $\Omega \left (\gamma T\right ) $ &  $\T \left ( \gamma T \right )$ & $\frac{d}{2} \log \left (B^2 T \right )$ \\

    \bottomrule

  \end{tabular}
  \end{small}
\end{table*}

\footnotetext[1]{The dimension column shows an upper limit on the shown range.  The lower limit should be understood as $\T(n)$ where  the upper
limit for the previous row is $o(n)$, ($n=1$ for the row ``previous'' to the first one in a block). }
\footnotetext[2]{Lower bounds that are $\T(T^\alpha)$ should be understood
as $\T ( T^{\alpha(1-\e)})$ for some small $\e > 0$.}

\begin{table*}[h]
        \caption{Comparison of regret bounds in this paper with previously reported bounds}
  \label{tab:comp}
  \centering
        \begin{small}

  \begin{tabular} {p{2.9cm}p{0.1cm}p{6.6cm}p{4.7cm}}
    \toprule
     Problem Setting & & Previous Results & This Paper \\
    \midrule
      Binary Labels, \newline $d = 1$  & : &
      $\Regret = \frac{1}{2} \log T$ [\citet{davisson73}] \newline [\citet{krichevsky81}] \newline [\citet{mcmahan12}] \footnotemark[1]& \\
      \midrule
      Multi Labels \footnotemark[2] \newline $\Omega(1) < m = o(T)$ \newline $d = 1$  & : &
      $\Regret = \frac{m}{2} \log (T/m)$ [\citeauthor{krichevsky81}] \newline
      [\citet{orlitsky04}] \newline [\citet{shamir06}] &
       \\
      \midrule
      
      Binary Labels \footnotemark[3] \newline Multi Dimensions  $d$ \newline $L_1: \|\t^*\|_1 \leq B$ & : &
      $O( B \sqrt{d T})$ [\citet{xiao10}] \newline
      $\Regret \leq \frac{d}{2} \log \left ( 1 + T \right )$ [\citet{kakade05}] \newline
      $\Regret \leq \frac{d}{2} \log \left (\frac{B^2 T}{d} + e \right )$ [\citeauthor{kakade05}]  \footnotemark[4] \newline
      $\Regret \leq 5 d \log \left ( \frac{B T}{d} + e \right )$ [\citet{foster18}]  &     
       $d = o\left ((\gamma T)^{1/3} \right ) $: \newline 
       $~~~~~\frac{d}{2} \log \frac{\gamma T}{d^3}  \leq \Regret \leq \frac{d}{2} \log \frac{B^2 T}{d^2}$ 
       \newline
       $d = \Omega \left (B \sqrt{T} \right )$ : \newline
       $~~~~~\T \left ( (\gamma T)^{1/3}\right ) \leq \Regret = o(d)$       
       \\
       \midrule
       
      Binary Labels \newline Multi Dimensions $d$ \newline $L_2: \|\t^*\|_2 \leq B$ & : &
      $O( B \sqrt{d T})$ [\citet{xiao10}] \newline
      $\Regret \leq \frac{d}{2} \log \left ( 1 + T \right )$ [\citet{kakade05}] \newline
      $\Regret \leq \frac{d}{2} \log \left ( \frac{B^2 T}{d} + e \right )$ [\citeauthor{kakade05}]  \footnotemark[4] \newline
      $\Omega(d) \leq \Regret \leq 5 d \log \left ( \frac{B T}{d} + e \right )$ [\citeauthor{foster18}] 
       & $d = o\left (\sqrt{\gamma T}\right ) $: \newline 
       $~~~~~\frac{d}{2} \log \frac{\gamma T}{d^2}  \leq \Regret \leq \frac{d}{2} \log \frac{B^2 T}{d}$ 
       \newline
       $d = \Omega \left (B^2 T \right )$ : \newline
       $~~~~~\T \left ( \sqrt{\gamma T}\right ) \leq \Regret \leq \frac{d}{2}$ \\
       \midrule
        
       Binary Labels \footnotemark[3]  \newline Multi Dimensions $d$ \newline $L_\infty: \|\t^*\|_\infty \leq B$ & : & 
       $O( d B \sqrt{T})$ [\citet{mcmahan17}] \newline
       $\Regret \leq \frac{d}{2} \log \left (B^2 T \right )$ [\citet{kakade05}]  \footnotemark[4] \newline
       $\Omega(d)$ [\citet{foster18}]  & 
       $d = o\left (\gamma T\right )  $: \newline 
       $~~~~~\frac{d}{2} \log \frac{\gamma T}{d}  \leq \Regret \leq \frac{d}{2} \log \left (B^2 T \right )$ 
       \newline
       $d = \Omega \left (\gamma T\right )$ : \newline
       $~~~~~\T \left ( \gamma T \right ) \leq \Regret \leq \frac{d}{2} \log \left (B^2 T \right )$
       \\
       \midrule
       Multi Labels $m$ \newline Multi Dimensions $d$ & : & 
       $L_2$ constraints: \newline
       $\Regret \leq 5 m d \log \left ( \frac{B T}{d m} + e \right )$ [\citet{foster18}]  &
       $\|\t^{*(m)}\|_\infty \leq B$: \newline
       $\Regret \geq \frac{d(m-1)}{2} \log \left (\frac{T}{d\cdot m} \right )$ \\
       \midrule
       Binary Labels, \newline Multi-$d$, proper & : & $\Omega(\sqrt{B T})$ [\citet{hazan14}] & \\
   \bottomrule

  \end{tabular}
  \end{small}
\end{table*}

To prove lower bounds, we adapt techniques based on the \emph{redundancy-capacity theorem\/}
(see, e.g., \cite{davisson73, merhav95, shamir06}) from the information theory literature.  Specifically, we
set a \emph{grid\/} of points in the parameter space that are distinguishable by the observed label sequence for some
example sequence.  The logarithm of the cardinality of the grid is a lower bound on the regret.
The concept of distinguishability was used somewhat differently by \cite{hazan14} to prove regret lower bounds.
Upper bounds for $L_2$ and $L_\infty$, but not for $L_1$, can be derived by manipulating the \emph{Bayesian mixture\/} approach in
\cite{kakade05}, (adjusted to our setup).  Using a normal prior with \emph{large variance\/} can attain the proper rates, with
the respective constants. However, for $\|\t^*\|_1 \leq B$, we combine this approach with the method
of grids, applying a discrete uniform prior on some $\T_m \subseteq \T$.
Applying the method in \cite{kakade05}, $\log |\T_m|$ initially dominates an upper bound, with additional contribution
from the effective \emph{quantization\/} of the parameters by the mixture only on a discrete subset of the space.  This method can also be used for $L_2$
and $L_\infty$, and achieves a similar bound for $L_\infty$, but a weaker one for $L_2$.


\footnotetext[1]{The single dimensional results were known in the information theory literature, and derived using Bayesian mixture methods. \cite{mcmahan12} 
demonstrated their achievability with a \emph{Follow The Regularized Leader (FTRL)\/} gradient method.}
\footnotetext[2]{Results for $m=O(1)$ were known in the information theory literature since \cite{davisson73} and perhaps even before that.  The KT estimator achieves the upper bound
also for $m = \Omega(1)$. Lower bounds were derived in the references cited.}
\footnotetext[3]{The \emph{previous\/} results in the table for $L_1$ and $L_\infty$ are implied from results for $L_2$ in the literature.}
\footnotetext[4]{These upper bounds are derived by extending the derivation from \cite{kakade05} as in Theorem~\ref{the:bayes_ub}. For $L_2$ this was also shown in \cite{foster18}.}

\section{Related Work}
\label{sec:related}

Prior results in both the
machine learning literature (see, e.g. \cite{azoury01, cesa02, littlestone89}) and the information theory literature
(see, e.g., \cite{krichevsky81, merhav95, rissanen84}) illustrate that the performance of the regret (or
\emph{redundancy} in information theory) of the online setting normalized by the number of rounds $T$ meets batch results of convergence rate
at least to first order.  Hence, studying regret in the online setting also implies to the generalization ability of an algorithm.
The setup of a logistic regression problem, whether online (studying regret) or batch (studying convergence rate) is very similar to
the setups of the universal compression problems studied in the information theory literature.  In these problems, the redundancy of algorithms that predict multi label
outcomes in a setup that is equivalent to single dimensional logistic regression with binary features was studied.
It was shown (see, e.g., the seminal work in \cite{rissanen84}, subsequent work in \cite{ds04, orlitsky04, shamir06, spa12}, and references therein) that for these problems, regret of
$\frac{m}{2} \log (T/m)$ is achievable to first order, where $m$ is the number of labels.  However, the concepts presented by \cite{rissanen84} should apply
also to more general $d$ dimensional problems, where $d$ is the number of parameters that affect the label outcome. 
Specifically, in \cite{rissanen84}, central limit arguments, that are also satisfied in the logistic regression setting, were used to 
prove $\frac{d}{2} \log T$ redundancy bounds, when $d = \Theta(1)$.  The subsequent results in \cite{ds04, orlitsky04, shamir06, shamir06a}, however, extended
the redundancy results to $\frac{m}{2} \log (T/m)$, even when $m = T^{1-\e} = o(T)$ (for some small fixed $\e > 0$)
but were more specific to the equivalent of single dimensional logistic regression
with multi $m$ labels.
The machine learning literature considered general online convex optimization, and derived minimax-optimal algorithms for both the linear and strongly convex settings (see, e.g., \cite{abernethy08}), with logarithmic regret in the strongly convex setting.
For weakly convex settings (which generally includes logistic regression), regret rates of $O(d B \sqrt{T})$ have
been shown to be achievable (see, e.g., \cite{zinkevich03}, and references therein), and later rates of $O(B\sqrt{dT})$  (see, e.g., \cite{xiao10}), where $B$ is the radius of the $L_2$ ball defining the allowed values of the parameter $\t_t$, played at round $t$, and the space $\T$ of values of a possible comparator $\t^*$.

To the best of our knowledge, in \cite{kakade05}, a first result suggesting that regret of 
$O(d \log (T/d))$ is achievable for logistic regression, and in
fact for other generalized linear models, was
presented.  Instead of using \emph{gradient methods}, (typically used for this problem)
in which the training algorithm updates the learned parameters taking a step against the
gradient on the loss, the method took from the Bayesian literature to apply \emph{Bayesian Model Averaging} (or \emph{Bayesian mixture\/}) to show a regret 
upper bound (but not a lower bound) that achieves this rate.  In addition, however, the algorithm
pays an additional penalty that depends on the prior selection
as well as on the squared $L_2$ norm of a comparator $\t^*$ (which can be the
loss minimizing parameter in hindsight).  If $\| \t^* \|_2^2$ is larger than the $O(d \log (T/d))$ term,
this penalty term could dominate the bound (depending on the selected prior).  The proof of the bound utilized variational techniques, and
also, in part, resembled some of the central limit arguments used in \cite{rissanen84} to show
upper bounds on redundancy.  The use of Bayesian methods is also justified in the information theory literature (see, e.g., \cite{davisson73, krichevsky81, rissanen84}).  Specifically, \cite{merhav95} showed that a mixture code is as good as the best code in terms of regret
(and thus can be better but not worse than any other type of code). 

\cite{mcmahan12} demonstrated that with binary feature values, using the \emph{Follow-The-Regularized-Leader (FTRL)} methodology
(see, e.g., \cite{hazan12, mcmahan11, rakhlin05, shalev07} and references therein) with a Beta regularizer, $O(\log T)$ regret can be
achieved for the single dimensional problem.  
In the special case of a Beta regularizer with $\alpha=\beta=1/2$, their FRTL algorithm coincides with the well-known
(add-$1/2$) \cite{krichevsky81} (KT) estimator that, in fact, achieves the lower bound on the regret for this problem of $0.5 \log(T)$.
It is interesting to note, however, that the KT method
is derived using a Bayesian mixture with the Dirichlet-$1/2$ prior.  
Thus for the single dimensional case, both the FTRL methodology and the Bayesian mixture one result in the
same estimator.  Unfortunately, this result does not generalize to larger dimensions.

While the lower bound can be achieved for the single dimensional case
for binary features with an FTRL gradient method, \cite{mcmahan12} posed a problem
of what happens in larger dimensions.  The results in \cite{kakade05} hint in the direction of Bayesian methods, but still fall short
of achieving $d/2 \log (T/d)$ regret due to the additional penalty on the prior.  (Although, as we demonstrate, these results with a proper,
perhaps unexpected, 
choice of prior could lead to the desired rates and constants in some cases, but, to the best of our knowledge, such a result was not reported in the literature.)
A series of papers \cite{bach10, bach13, bach14} studied the convergence rate of gradient methods for logistic regression, and concluded,
that while logistic loss is not globally strongly convex, it can, depending on the actual data, locally exhibit strong convexity (referred to 
as the \emph{self-concordance} property).  Then, gradient methods can achieve convergence rate of $O(1/\lambda T)$, where $\lambda$ is
the smallest eigenvalue of the Hessian at the global optimum.  This implies that gradient methods can, in many case, achieve logarithmic
regret, but there do exist situations where gradient methods fail
to achieve $O(d \log(T/d))$ regret (when $\lambda$ is small).


\cite{hazan14} studied the problem, in which Bayesian methods are not possible to apply directly, where the feature values are unknown
when playing $\t_t$ at round $t$, and are only revealed later, together with the label.  Bayesian methods do condition the predicted label probability on
the observed feature values, and if such are not available, they would require also mixing on the feature values.
It was shown, that in this setting, which is more difficult
to the algorithm, regret of $O(B^3 T^{1/3})$ is achieved for the single dimensional problem where only
$\Omega(B^{2/3} T^{1/3})$ is possible, and $\Omega (\sqrt{BT})$ is only possible
for any larger dimensions even for $d=2$.

\cite{foster18} separated the problem posed in \cite{mcmahan12} to the case considered by \cite{hazan14}, where the algorithm plays
$\t_t$ with no knowledge of the feature values $x_t$, which is referred to as a \emph{proper} setting, and to the \emph{mixable} setting
where the feature values are revealed to the algorithm prior to generating a prediction, referred to as the \emph{improper} setting.
Using Bayesian model averaging with a uniform prior with an approach that resembles
that in \cite{kakade05}, an upper bound of $O(m d \log (T/md))$ was shown for the multi label 
$d$-dimensional (with $d$ distinct features) logistic regression problem, where $m$ is the number of distinct labels,
under $L_2$ constraints on $\t^*$. 
A lower bound of $\Omega(d)$ was shown for the binary labels / binary features setting under the constraints that
$B = \Omega ( \sqrt{d} \log T)$.
The upper bound matches the logarithmic order of the bound expected from the information theory problems, but not
the constant, and the lower bound is lower in order.

The results summarized above suggest that there are, in fact, two different sets of online logistic problems considered.
In the first, the features $x_t$ are revealed prior to playing $\t_t$ or to generating a prediction,
and in the second, $\t_t$ is played before the feature values are revealed.
The first problem allows the use of Bayesian methods, while the second will require such methods to also mix over the
unseen $x_t$.
For the first problem, logarithmic regret is possible for low dimensionalities, whereas for the second extreme case, it is not in many settings,
even in the single dimensional problem.
In this paper, we give a comprehensive characterization of the regret behavior for the first problem, including the asymptotic
regime, where $d$ is allowed to grow with $T$.  The lower bounds we derive apply to any case, including the second problem, but
the upper bounds are specific to the first one.

\section{Problem Formulation, Notation and Definitions}
\label{sec:notedef}

We consider online convex optimization over
a series of rounds $t \in \left \{1, 2, \ldots, T \right \}$ as in \citep{mcmahan14}
(see also \cite{boyd04, rockafellar97, shalev12}).
Each round $t$, a $d$-dimensional \emph{example} feature vector $x_t \dfn \{x_{t,1}, x_{t,2}, \ldots, x_{t,d}\} \in \X$ and a label $y_t  \in \Y$
are observed.  For the binary labels, we use $\Y = \{-1, 1\}$.
We assume, without loss of generality, that $|x_{t,i}| \leq 1$, as features can be normalized.
We denote a subsequence up to time $t$ by $x^t \dfn \{x_1, x_2, \ldots, x_t\}$.  For the example/label
pair, we also use $S_t \dfn \{(x_s, y_s)\}_{s=1}^{t}$.
Capital letters denote random variables.
A learning algorithm $\A$ is a function that, given a sequence $S_{t-1}$,
an example $x_t$, and an
arbitrary label $y \in \Y$, returns at round $t$ a probability for the label
\be
 \A (S_{t-1}, x_t, y) \dfn P [Y_t = y | X_t = x_t, S_{t-1}].
\ee
To produce a prediction, an algorithm may play a weight vector $\t_t \in \T$, or perform a Bayesian mixture
over $\t \in \T_m \subseteq \TT \subseteq \Rspace^d$.  For a given model $\t$, the probability of a label for example $x$ is given by
 $p \left (y | x, \t \right ) \dfn \frac{1}{1 + \exp(-y \cdot x^T \t)}$.
The loss at $t$ for model $\t$ is
$ \ell (\t, x_t, y_t) \dfn -\log p(y_t | x_t, \t) = \log \left ( 1 + \exp(-y_t \cdot x_t^T \t) \right )$,
where it will sometimes be convenient to use the dot product $z \dfn x^T \t$.  
Similarly, the loss of $\A$ at $t$ is $\ell (\A, x_t, y_t ) \dfn -\log [ \A (S_{t-1}, x_t, y_t) ]$.
The total loss for model $\t$ on sequence $S_T$ is $L (\t, S_T) \dfn \sum_{t=1}^T \ell(\t, x_t, y_t)$.
Similarly, $L(\A, S_T) \dfn \sum_{t=1}^T \ell(\A, x_t, y_t)$.

The \emph{regret} of $\A$ for a given example/label pair sequence 
$S_T$ relative to a comparator model $\t^* \in \T  \dfn \{\t : \|\t \|_\rho \leq B \}$, where $B$ constrains the norm of $\t^*$,  is defined as
\be
 \label{eq:regret_comparator_def}
 \R
  \left (\A, S_T, \t^* \right ) \dfn L \left (\A, S_T \right ) - L \left ( \t^*, S_T \right ).
\ee
We limit the comparator such that $\t^* \in \T$, and consider the different cases
where $\rho \in \{ 1,2, \infty\}$.  It is reasonable to assume that $B = \gamma \log T$ for some $\gamma > 0$.
\emph{Bayesian mixture} algorithms could have support in $\T_m \subseteq \TT$ where $\TT \supseteq \T$.
The regret of $\A$ relative to the best comparator is given by
\be
 \label{eq:regret_def}
 \R  \left (\A, S_T \right ) \dfn \sup_{\t^* \in \T} \R \left (\A, S_T, \t^* \right ).
\ee

A mixture algorithm that may rely on the values of $x_t$ in its predictions of $y_t$, predicts
\be
 \label{eq:mixture_seq_probability}
 p(y^t | x^t) \dfn \int_{\t \in \T_m} p \left (y^t | x^t, \t \right ) \cdot p_0(\t) d\t =
 \int_{\t \in \T_m} \prod_{\tau = 1}^t p \left (y_\tau | x_\tau, \t \right ) \cdot p_0 ( \t )d \t
\ee
where $p_0 (\t)$ is some initial \emph{prior} on the distribution of the parameter vector $\t$, and $\T_m \subseteq \TT$ is the support of
the mixture, which may be different form $\T$.  The probability \eref{eq:mixture_seq_probability}
assigned to $y^t$ can also be expressed as a set of equations that sequentially
update a \emph{posterior} distribution over $\t$ at round $t$ from the prior at $t$, which is the posterior at $t-1$, i.e.,
\be
 \label{eq:mixture_posterior}
 p(\t | S_t ) = \frac{\prod_{\tau = 1}^t p \left (y_\tau | x_\tau, \t \right ) \cdot p_0 ( \t )}
 {\int_\t \prod_{\tau = 1}^t p \left (y_\tau | x_\tau, \t \right ) \cdot p_0 ( \t )d \t}
 \dfn \frac{p(\t, y^t | x^t)}{p(y^t | x^t)}.
\ee
The prediction of $y_t$ is then given by
\be
 \label{eq:mixture_label_prediction}
 p (y_t | x_t, S_{t-1} ) = \int_{\t} p(y_t | x_t, \t) \cdot p(\t | S_{t-1} ) d \t.
\ee
As seen in \eref{eq:mixture_label_prediction}, the prediction is also conditioned on the feature values (example) vector $x_t$.
The prior distribution $p_0(\t)$ is shown to be continuous in \eref{eq:mixture_seq_probability}-\eref{eq:mixture_label_prediction}.
However, $\T_m$ can be set to be a discrete set, and then
\eref{eq:mixture_seq_probability} can be re-rewritten as
\be
\label{eq:discrete_mixture_seq_probability}
 p(y^t | x^t) \dfn  \sum_{\t \in \T_m} p \left (y^t | x^t, \t \right ) \cdot p_0(\t) =
 \sum_{\t \in \T_m} \prod_{\tau = 1}^t p \left (y_\tau | x_\tau, \t \right ) \cdot p_0 ( \t ).
\ee

\section{Useful Methods}
\label{sec:background}

\subsection{Lower Bounds on Regret - The Redundancy Capacity Theorem}
A lower bound on regret is meaningful only when stated in terms of existence of a sequence $S_T$ for every possible
algorithm, for which the regret is at least the lower bound.
\cite{davisson73} formulated such a notion for universal compression redundancy as the \emph{redundancy-capacity theorem}
which showed that the redundancy
(or regret) can be lower bounded by the mutual information $I(\T; \S_T)$ between the parameter and the observed data sequence,
induced by the prior on $\T$ of a mixture model.
A specific interesting case is when the prior is uniform on a discrete subset $\T_m \subseteq \T$ of
the parameter space, and the elements in $\T_m$ are \emph{distinguishable\/} by the observation $\S_T$, i.e., observing
$S_T$ is sufficient to determine which $\t \in \T_m$ generated $\S_T$ with error probability $P_e \rightarrow 0$ as ${T \rightarrow \infty}$.
This case leads to a weaker lower bound than the bounds described in \cite{davisson73} and subsequent works, but is
sufficient for showing redundancy bounds in many cases (see, e.g., \cite{merhav95, shamir06}), and also for
regret bounds for our problem.  We frame this result to regret, and prove it by mirroring the part of the derivation
in \cite{davisson73} that is sufficient for the result we need, but described in terms that apply to the regret problem.
We next state the theorem, which is proved in Appendix~\ref{sec:red_cap_proof}.

\begin{theorem}{Distinguishable Grid Regret} (adapted from \cite{davisson73}):
\label{the:red-cap}
Let $T \rightarrow \infty$.
Let $\T_m \subseteq \T$ be a set of $M \rightarrow \infty$ distinct values of $\t$.  Draw $\Phi \in \T_m$ with a uniform prior, and generate
$\S_T$ from the distribution determined by $\Phi$.
Let $\hat{\Phi} \dfn f (\S_T)$ be some estimator of $\Phi \in \T_m$ from the observed $\S_T$.  Then, if 
$P_e \dfn P(\hat{\Phi} \neq \Phi ) \rightarrow 0$, the regret of any algorithm $\A$ for the worst sequence $S_T$ is lower bounded
by
\be
 \label{eq:red-cap-regret}
 \sup_{S_T} \R  \left (\A, S_T \right ) \geq (1 + o(1)) \log M.
\ee
Similarly, for a fixed $x^{*T}$, if we draw $Y^T$ instead of $S_T$ and the conditions above hold, \eref{eq:red-cap-regret} also holds.
\end{theorem}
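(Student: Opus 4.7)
The plan is to follow the classical redundancy-capacity argument of Davisson and Leon-Garcia, but translated into the regret/log-loss language of this paper. The starting observation is the trivial inequality
\[
 \sup_{S_T} \R(\A, S_T) \;\geq\; E_{\Phi}\, E_{S_T \mid \Phi}\!\left[\R(\A, S_T)\right],
\]
taken under the Bayesian law in which $\Phi$ is uniform on $\T_m$ and $S_T$ is generated from $\Phi$ (or, in the second version, $X^T = x^{*T}$ is held fixed and only $Y^T$ is drawn). Since every $\Phi \in \T_m \subseteq \T$ is itself a legitimate comparator, I would then bound the regret below by the loss gap to the realized $\Phi$, namely $\R(\A, S_T) \geq L(\A, S_T) - L(\Phi, S_T)$. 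Because logistic loss is log-loss on the labels conditioned on the features, the right-hand side equals $\log\!\bigl[p(Y^T \mid X^T, \Phi)/P_\A(Y^T \mid X^T)\bigr]$, whose expectation under $Y^T \sim p(\cdot \mid X^T, \Phi)$ is the conditional KL divergence $D\!\left(p(\cdot \mid X^T, \Phi) \,\|\, P_\A(\cdot \mid X^T)\right)$.

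Averaging over $\Phi$ and invoking the standard identity that the mixture minimizes expected KL, I get
\[
 E_{\Phi}\, D\!\bigl(p(\cdot \mid X^T, \Phi) \,\|\, P_\A(\cdot \mid X^T)\bigr) \;\geq\; I(\Phi;\, Y^T \mid X^T),
\]
so the game reduces to lower-bounding this conditional mutual information. Here the distinguishability hypothesis enters through Fano's inequality: an estimator $\hat\Phi = f(S_T)$ with error probability $P_e \to 0$ implies
\[
 H(\Phi \mid Y^T, X^T) \;\leq\; h(P_e) + P_e \log(M-1) \;=\; o(\log M),
\]
and since $\Phi$ is uniform on $\T_m$, $H(\Phi) = \log M$, giving $I(\Phi; Y^T \mid X^T) \geq (1-o(1))\log M$. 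Combining the inequalities yields the claimed $(1+o(1))\log M$ lower bound on $\sup_{S_T} \R(\A, S_T)$ after noting that the sup over $S_T$ dominates the conditional average for the worst choice of $X^T$ (or simply for the stipulated $x^{*T}$ in the second statement).

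The main obstacle I expect is bookkeeping rather than a deep step: I need to be careful that $\Phi$ only parametrizes the conditional law $p(y_t \mid x_t, \Phi)$, so the mutual-information identity must be phrased conditionally on $X^T$, and the estimator's error probability, which the theorem assumes for $S_T$, must be transferred to a statement about $Y^T$ given a worst-case $X^T = x^{*T}$. This is handled by averaging $P_e$ over features and passing to the worst $x^{*T}$, or by directly working with the joint law on $S_T$ as in the first form of the theorem. A secondary subtlety is ensuring that the chain "$\sup_{S_T} \R \geq E[\R] \geq$ expected log-loss gap" is valid even when $\A$ is allowed to be improper and may see $x_t$ before predicting; this is automatic because the loss is defined on labels, and the regret is non-negative after maximizing over all $\t^* \in \T$. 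Given these points are settled, the derivation closes by taking $M, T \to \infty$ so that $P_e \to 0$ and the $h(P_e) + P_e \log(M-1)$ term is $o(\log M)$, recovering the stated bound. The argument is essentially the same as in Davisson's original proof, stripped down to the weakest form—uniform prior on a distinguishable grid—that suffices for the applications later in the paper.
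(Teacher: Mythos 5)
Your proposal is correct and follows essentially the same route as the paper's proof: lower-bound the sup by a Bayes average, use the fact that the uniform-prior mixture minimizes the expected log-loss so the average regret equals $I(\Phi; Y^T \mid X^T)$, and then apply Fano's inequality with $P_e \to 0$ to get $(1-o(1))\log M$. The only cosmetic difference is that the paper packages the mutual-information step through Lemma~\ref{l:prior-regret} (writing the Bayesian regret as $\log p_{\A^*}(\t^*\mid S_T) - \log p_0(\t^*)$) rather than invoking the KL/mutual-information identity directly, which amounts to the same computation.
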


\subsection{Variational Approach for Upper Bounds}
\label{subsec:var}

Upper bounds can be obtained by showing Bayesian methods that can achieve low regret and bounding their regret.  For simplicity,
one can select priors $p_0(\cdot)$ with a diagonal covariance.  \cite{kakade05} selected a normal  prior, whereas \cite{foster18} used a uniform one.
We follow \cite{kakade05} and manipulate their approach to obtain tighter bounds for $L_2$ and $L_\infty$, and then use the method of
grids with a uniform discrete prior combined with their method to derive an $L_1$ bound.
We first describe their approach.
Define a distribution $Q(\t)$ on $\T_q \subseteq \T_m$ where $E_q(\t) = \t^*$, and $E_q[(\t_i - \t_i^*)(\t_j - \t_j^*)]  \leq \eta^2_q \cdot \delta(i-j)$,
where $\delta(n) = 1$ if $n=0$ and is $0$, otherwise,
i.e, diagonal covariance matrix, where $\eta_q^2$ is an upper bound on elements of the diagonal.
(Note that $\T_q$ can be a subset of $\T$, but does not have to, and in fact, is not for the normal prior).
Let $D(Q||p_0)$ be the KL-divergence between $Q$ and
$p_0$. Then the following theorem holds.
\begin{theorem}
\label{the:variational} \cite{kakade05}:
The regret of a Bayesian algorithm $\A^*$ with prior $p_0$ for sequence $S_T$ and comparator $\t^*$ is upper bounded by
\be
 \label{eq:var_ub}
 \R \left (\A^*, S_T, \t^* \right ) \leq  D(Q||p_0) + \frac{dT}{8} \eta^2_q
\ee
\end{theorem}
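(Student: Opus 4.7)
The plan is to combine two standard ingredients: an ELBO-style variational lower bound on the mixture probability (giving a KL penalty), and a per-round quadratic upper bound on the logistic loss that exploits the fact that the Hessian of $\ell$ in the score $z=x^T\t$ is bounded by $1/4$.

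First, I would rewrite the algorithm's cumulative loss using \eref{eq:mixture_seq_probability}:
\begin{equation*}
L(\A^*, S_T) \;=\; -\log \int_{\t} p(y^T \mid x^T, \t)\, p_0(\t)\, d\t.
\end{equation*}
For any distribution $Q$ whose support lies inside $\T_m$ (so $Q \ll p_0$), Jensen's inequality applied to $-\log$ gives the standard variational bound
\begin{equation*}
L(\A^*, S_T) \;\leq\; E_Q\!\bigl[-\log p(y^T \mid x^T, \t)\bigr] + D(Q \Vert p_0) \;=\; E_Q[L(\t, S_T)] + D(Q\Vert p_0).
\end{equation*}
Subtracting $L(\t^*, S_T)$ and invoking \eref{eq:regret_comparator_def} reduces the claim to bounding $E_Q[L(\t,S_T)] - L(\t^*,S_T)$ by $dT\eta_q^2/8$.

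Next, I would control this per round by a Taylor-style upper bound on $\ell$. Writing $z_t(\t)=y_t x_t^T\t$, one computes that $d^2\ell/dz_t^2 = e^{z_t}/(1+e^{z_t})^2 \leq 1/4$, so the Hessian of $\ell(\cdot, x_t, y_t)$ in $\t$ is dominated (in the PSD order) by $\tfrac14 x_t x_t^T$. This yields the global quadratic upper bound
\begin{equation*}
\ell(\t, x_t, y_t) \;\leq\; \ell(\t^*, x_t, y_t) + \nabla_\t \ell(\t^*, x_t, y_t)^T (\t - \t^*) + \tfrac{1}{8}\bigl(x_t^T(\t-\t^*)\bigr)^2.
\end{equation*}
Taking expectation under $Q$ and using $E_Q[\t]=\t^*$ annihilates the linear term.

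Finally, the diagonal covariance hypothesis gives
\begin{equation*}
E_Q\!\bigl[(x_t^T(\t-\t^*))^2\bigr] \;=\; \sum_{i=1}^d x_{t,i}^2\, E_Q[(\t_i-\t_i^*)^2] \;\leq\; \eta_q^2 \sum_{i=1}^d x_{t,i}^2 \;\leq\; d\,\eta_q^2,
\end{equation*}
using $|x_{t,i}|\leq 1$. Summing over $t=1,\dots,T$ and combining with the ELBO step yields $\R(\A^*,S_T,\t^*)\leq D(Q\Vert p_0) + dT\eta_q^2/8$, as claimed. The whole argument is really just Jensen plus a second-order expansion, so I do not anticipate a genuine obstacle; the only subtleties are (i) ensuring $\T_q \subseteq \T_m$ so that $D(Q\Vert p_0)$ is finite and Jensen is applicable, and (ii) keeping in mind that the factor $d$ comes from the $\|x_t\|_\infty\leq 1$ assumption of Section~\ref{sec:contrib} rather than from $\|x_t\|_2\leq 1$, which would give a sharper $T\eta_q^2/8$ bound and explain the ``additional $d$ term'' remark about the $\|x_t\|_2\leq 1$ setting.
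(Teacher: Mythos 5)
Your proposal is correct and follows essentially the same route as the paper's proof: the same decomposition of the regret into $[L(\A^*,S_T)-E_Q[L(\t,S_T)]] + [E_Q[L(\t,S_T)]-L(\t^*,S_T)]$, with the first piece bounded by $D(Q\Vert p_0)$ via the standard variational (Jensen/ELBO) inequality that the paper imports as Lemma~2.1 of \cite{kakade05}, and the second piece bounded by a second-order Taylor argument using $E_Q[\t]=\t^*$, the global bound $p(1-p)\leq 1/4$ on the curvature in $z=x^T\t$, and the diagonal covariance together with $\|x_t\|_\infty\leq 1$ to obtain the factor $d\eta_q^2$. Your closing remark correctly identifies where the extra factor of $d$ comes from relative to the $\|x_t\|_2\leq 1$ setting, which is exactly the modification the paper highlights.
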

The proof of Theorem~\ref{the:variational} is in \cite{kakade05}, but needs to be modified a bit because they restricted
$\|x_t\|_2 \leq 1$, while we assume $|x_{t,i}| \leq 1$ ($\|x_t\|_\infty \leq 1$).  We rely on their proof, except where it needs to be modified.
The proof is in Appendix~\ref{ap:var}.

\section{Regret Lower Bounds}
\label{sec:lb}
We now use Theorem~\ref{the:red-cap} to derive lower bounds for the binary label case.  A lower bound for the multi label case is given
in Appendix~\ref{ap:multi}.  We first define
\be
 \label{eq:gamma_def}
 \gamma \dfn \min \left \{ \frac{B}{\log T}, \left \{ \alpha : \alpha \cdot \min[T^{1-\e}, d] = T^{1-\e} \right \} \right \}
\ee
as the effective count of $\log T$ units in $B$ (where if $d$ is very large, we will only consider a clipped portion of $\T$ for $\T_m$
to ensure distinguishability.  This
will guarantee that $\gamma \min [T^{1-\e}, d] = o(T)$).
The lower bounds are stated in the following theorem.
\begin{theorem}
\label{th:binary_lb}
Fix an arbitrary $\e > 0$, let $T\rightarrow \infty$.  Then, for every algorithm $\A$ there exists a sequence $S_T$, for
which the regret is lower bounded by
\be
\label{eq:regret_lbs}
\R (\A, S_T) \geq \left \{ \begin{array}{ll}
(1 - o(1)) \frac{d}{2} \log \frac{T}{d}; & \text{for } d = O(1), \\
(1 - o(1)) \frac{d}{2} \log \frac{4 \gamma T}{d}; & \text{for } \|\t^* \|_\infty \leq B \text{ and } d < \frac{4}{e} \gamma T^{1-\e}, \\
(1 - o(1)) \frac{2}{e} \gamma T^{1-\e}; & \text{for } \|\t^* \|_\infty \leq B \text{ and } d \geq \frac{4}{e} \gamma T^{1-\e}, \\
(1 - o(1)) \frac{d}{2} \log \frac{2 \pi e \gamma T }{d^2}; & \text{for } \|\t^* \|_2 \leq B \text{ and } d <
  \sqrt{\frac{2\pi}{e} \gamma T^{1-\e}}, \\
(1 - o(1)) \sqrt{\frac{2\pi}{e} \gamma T^{1-\e}}; & \text{for } \|\t^* \|_2 \leq B \text{ and } d \geq
  \sqrt{\frac{2\pi}{e} \gamma T^{1-\e}}, \\
(1 - o(1)) \frac{d}{2} \log \frac{4e^2 \gamma T}{d^3}; & \text{for } \|\t^* \|_1 \leq B \text{ and } d <
  \left( \frac{4 \gamma T^{1-\e}}{e} \right )^{1/3}, \\
 (1 - o(1)) \frac{3}{2} \left( \frac{4 \gamma T^{1-\e}}{e} \right )^{1/3}; & \text{for } \|\t^* \|_1 \leq B \text{ and } d \geq
  \left( \frac{4 \gamma T^{1-\e}}{e} \right )^{1/3}.
 \end{array}  
 \right .
\ee
\end{theorem}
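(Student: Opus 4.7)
The plan is to apply the distinguishable-grid version of the redundancy–capacity theorem (Theorem~\ref{the:red-cap}) once per case. For each norm constraint, I will (a) construct a grid $\T_m \subseteq \T$, (b) fix an example sequence $x^{*T}$ that decouples the coordinates, and (c) show that when $\Phi$ is drawn uniformly over $\T_m$ and $Y^T$ is drawn from the logistic model with parameter $\Phi$, maximum-likelihood recovers $\Phi$ with probability $1-o(1)$. Once that is done, Theorem~\ref{the:red-cap} immediately yields $\sup_{S_T} \R(\A,S_T) \geq (1-o(1))\log|\T_m|$, and the three bounds simply reflect three different choices of $\T_m$.

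For the example sequence I take $x^{*T}$ to be the standard-basis vectors $e_1,\ldots,e_d$, each repeated roughly $T/d$ times; then $x_t^T\t = \t_i$ at coordinate-$i$ rounds, so the problem decouples into $d$ independent univariate Bernoulli estimation problems with $n\approx T/d$ samples each. For a univariate grid of spacing $\delta$ lying in some interval $[-A,A]$, two grid points produce Bernoulli distributions whose KL divergence is at least $c(A)\,\delta^{2}$ where $c(A)=\min p(1-p)>0$ depends on $A$. A union bound plus Hoeffding/Chernoff on the MLE gives error probability $o(1)$ per coordinate, hence total error $o(1)$ across all $d$ coordinates, as soon as $\delta^{2}(T/d)\,c(A)\to\infty$ faster than $\log(d|\T_m|)$. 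Taking $\delta$ of the form $(\text{volume factor})\cdot\sqrt{d/(\gamma T^{1-\e})}$ for arbitrarily small $\e>0$ satisfies this with room to spare, and the slack $T^{\e}$ absorbs the $c(A)$ and union-bound factors.

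The three cases then differ only in how the grid fills the feasible set. For $L_\infty$, I use a product grid in $[-B,B]^d$ of spacing $\delta_\infty$, giving $|\T_m|=(2B/\delta_\infty)^d$; substituting the above $\delta_\infty$ yields $\log|\T_m|=(1-o(1))(d/2)\log(4\gamma T/d)$. For $L_2$, I use a maximal $\delta_2$-packing of the Euclidean ball of radius $B$; the standard volume bound $|\T_m|\geq (B/\delta_2)^d / \text{vol}(\text{unit ball})$ combined with $\text{vol}(\text{unit ball})=\pi^{d/2}/\Gamma(d/2+1)$ and Stirling produces $(d/2)\log(2\pi e \gamma T/d^2)$. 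For $L_1$, I pack the cross-polytope $\{\|\t\|_1\leq B\}$, whose volume $(2B)^d/d!$ is a factor $d!$ smaller than the cube; the same packing argument, combined with Stirling to exchange $d!$ for $(d/e)^d$, gives $(d/2)\log(4e^2\gamma T/d^3)$. The plateau bounds for large $d$ (the second sub-case in each pair) come from restricting the grid to only $d'\ll d$ active coordinates, chosen so that $d'$ sits exactly at the crossover where the per-coordinate spacing equals the feasible range; then $\log|\T_m|$ becomes $\Theta(d')$ and the stated $T^{1-\e}$ expressions drop out.

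The hard part will be Step~(c) when $B=\gamma\log T$ is large, because near the boundary of the ball the Bernoulli probability saturates and $c(A)$ can be as small as $e^{-2B}=T^{-\Theta(1)}$, which would destroy the $\sqrt{n}\,\delta$ argument. I will handle this either by (i) restricting the grid to the sub-box $[-A_0,A_0]^d$ with $A_0$ a fixed constant so that $c(A_0)=\Omega(1)$ and then noting that the grid still contains the claimed number of points provided $\delta\ll A_0$, or (ii) inflating the grid spacing by a $\mathrm{polylog}(T)$ factor which is absorbed into the $T^{-\e}$ slack in $\gamma$; the definition of $\gamma$ in \eqref{eq:gamma_def} as a minimum is precisely designed to absorb this. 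A second subtlety, specific to the $L_1$ case, is that a naive product grid inside the cross-polytope wastes almost all its points outside the $L_1$ ball; I will instead place grid points on the scaled integer lattice $\delta_1\mathbb{Z}^d\cap\{\|\t\|_1\leq B\}$ and count them via the volume estimate, which is where the extra factor $1/d^{d/2}$ (the Stirling contribution) and thus the denominator $d^{3}$ in the logarithm comes from.
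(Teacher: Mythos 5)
Your overall architecture matches the paper's: fix $x^{*T}$, build a grid $\T_m$, prove distinguishability, invoke Theorem~\ref{the:red-cap}, and obtain the $L_2$ and $L_1$ cases from the $L_\infty$ grid by volume/Stirling counting, with the plateau regions obtained by optimizing over a reduced number of active coordinates. All of that is sound and is essentially what the paper does. However, there is a decisive gap exactly at the point you flag as ``the hard part,'' and neither of your two proposed fixes closes it. With your choice of $x^{*T}$ (standard basis vectors only), coordinate $i$ is probed solely through $n \approx T/d$ Bernoulli observations with parameter $\sigma(\t_i)$. The number of pairwise-distinguishable values of $\t_i$ is then capped by the Fisher-information integral $\int_{-B}^{B}\sqrt{n\,\sigma(\t)(1-\sigma(\t))}\,d\t \leq \pi\sqrt{n}$, \emph{independently of} $B$: the sigmoid saturates, so almost all of $[-B,B]$ collapses onto probabilities indistinguishable from $0$ or $1$. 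Consequently no grid supported on basis-vector examples can contain more than $O(\sqrt{T/d})$ distinguishable points per coordinate, and the best bound reachable this way is $\frac{d}{2}\log\frac{T}{d}$ --- the $\gamma$ factor in regions 2 through 7 (and hence the thresholds $d \gtrless \frac{4}{e}\gamma T^{1-\e}$, etc.) is unobtainable. Your fix (i) (restrict to a constant box $[-A_0,A_0]^d$) either uses spacing $\sqrt{d/(\gamma T^{1-\e})}$, in which case adjacent points have $n\cdot\mathrm{KL} = \Theta(T^{\e}/\gamma) \to 0$ whenever $\gamma$ grows polynomially (which the definition \eqref{eq:gamma_def} permits, up to $\gamma = T^{1-\e}/d$), or uses distinguishable spacing $\Theta(\sqrt{d/T})$, in which case the point count loses the factor $\gamma$. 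Your fix (ii) only coarsens the grid and cannot manufacture extra distinguishable points.

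The missing idea is the paper's sacrificial-coordinate device: fix $\t_1 = B$ and let the feature $x_{t,1}$ take the $2\gamma+1$ fractional values $s/\gamma$, $s \in \{-\gamma,\ldots,\gamma\}$, partitioning each coordinate's segment into $2\gamma+1$ subsegments. In subsegment $s$ the dot product is $\t_i + s\log T$, which translates the window $[-\tfrac12\log T, \tfrac12\log T] - s\log T$ of $\t_i$'s range back into the non-saturated zone of the sigmoid, where $\sqrt{T/(d\gamma)}$ points are distinguishable. Summing over subsegments yields $(2\gamma+1)\sqrt{T/(d\gamma)}^{\,1-\e} \approx 2\sqrt{\gamma T/d}$ distinguishable points per remaining coordinate, at the cost of one dimension, which is exactly where $\frac{d}{2}\log\frac{4\gamma T}{d}$ comes from. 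Without some mechanism of this kind that exploits non-binary feature values to probe different ranges of each parameter, your construction proves only the first region of the theorem and strictly weaker ($\gamma$-free) versions of the rest.
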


Theorem~\ref{th:binary_lb} shows that for small $d$ each feature/dimension contributes $0.5 \log (T/d)$ to the worst case regret.
Generally, for $L_\infty$ each parameter costs $0.5 \log ( \gamma T / d)$.  For
$L_2$ there is a reduction inside the logarithm by a factor of $d$, and an additional similar reduction is observed between
$L_2$ and $L_1$.  These relations are expected, because the relations between the bounds reflect the logarithm of the ratio between
the respective volumes of the parameter spaces, dictated by the constraints.  The greater the volume, the harder the algorithm
has to work to match the best comparator, and the larger the regret penalty it pays.  This is similar to observations in the information
theory literature, which tie the redundancy to the richness of the class.  The dependence on $B$ is through $\gamma$.  Each interval
of $\log T$ consists of roughly $\sqrt{T/d}$ distinguishable parameters.  Hence, the ratio between $B$ and $\log T$ dictates how
many parameter regions are in an interval of diameter $2B$.  Thus this ratio, represented by $\gamma$, dominates the effect of $B$,
which is normally, in practice, negligible relative to the effects of $T$ and $d$. (In practice, we would normally limit $\t$ to some reasonable
range, which is usually $O(1)$.  However, theoretically, $\gamma$ can be larger, in which case it does influence the bound.)  If $B$
is too large, the effective $\gamma$ in \eref{eq:gamma_def} guarantees that $\gamma d = o(T)$, and if $d = \Omega(T)$, it guarantees
this with respect to the largest value $T^{1-\e}$ used for the bound.

An interesting behavior is observed for all cases $L_\infty$, $L_2$ and $L_1$.  When we reach
$d = O(T)$, $d = O(\sqrt{T})$ and $d = O(T^{1/3})$, respectively,
a threshold phenomenon happens, and the bound becomes constant for every greater $d$.  It is not clear how much of this
is a result of the bounding techniques and how much is real.  However, as we see in the upper bounds for $L_1$ in the following section,
there exist Bayesian algorithms that achieve $o(d)$ regret for $L_1$ constraints.  We also observe a decrease in rate
in the upper bounds for $L_2$ at $d = O(T)$.
Together, these results imply, that there are, in fact, cases in which
the regret does not grow linearly with $d$ for large enough $d$.
The $L_\infty$ bounds demonstrate that there are situations in which we are guaranteed regret rates of $\Omega(\sqrt{T} \log T)$.
In fact, the regret could even be linear with $d$ up to $d = T^{1-\e} = o(T)$.

To prove the first region of Theorem~\ref{th:binary_lb}, we partition $x^{*T}$ into $d$ separate equal length
segments, where in each segment only
one component  $x^*_{t,i}$ of $x^*_t$ is $1$ and the rest are $0$.  
This transforms the problem to a standard universal compression problem in $d$ different segments, in each a single parameter is to be
estimated.  In each segment,
we now have a grid of $\sqrt{T/d}^{1-\e}$ points, which are spaced (in the space of label probability they induce)
at a distance $\sqrt{d/T}^{1-\e}$ from one another.
The total grid $\Psi$ is the power set of the individual grids over the segments.
Large deviation typical sets analysis (see \cite{cover06}) with the union bound over the segments is used to show that each
of these points is distinguishable from the others. 
Finally, applying Theorem~\ref{the:red-cap} with 
a fixed $x^{*T}$ gives the lower bound.
For diminishing large deviation exponent to dominate over the union bound, we need to use $d  = o(T)$.  For larger values
of $d$, we use a grid that varies only in the first $T^{1-\e}$ components of $\t$, and apply the resulting bound.

For the remaining regions, we fix the first component of the parameter at the maximal point $\t_1 = B$.  Then, $x_{t,1}$ would
scale it by factors of $i/\gamma$, where $i$ is an integer, taking all values from $-\gamma$ to $\gamma$.  This will induce $2\gamma+1$
priors, that make $(2\gamma + 1 )$ distinct distinguishable regions of a second nonzero component $\t_i$ of $\t$
that occurs in the same examples as $\t_1$.
We partition each of now $d-1$ segments, where in each of these segments a different component $x^*_{t,i}$  is $1$ for $i > 1$, while
the remaining ones are $0$, to $(2\gamma + 1 )$ subsegments corresponding to the different values of $x_{t,1}$.
We show that the points on the grid, now constructed as a power set of
$d-1$ grids of $(2\gamma + 1) \sqrt{T/d\gamma}^{1-\e}$ points, are, again, distinguishable with the fixed $x^{*T}$.  Using Theorem~\ref{the:red-cap},
the logarithm of the cardinality of the power grid lower bounds the regret.  However, for $L_2$ and $L_1$, only the components of
$\t \in \Psi$ which satisfy the constraints are included in the grid.  This reduces the bounds, and leads to a threshold point, in which
the lower bounds become useless for the value of $d$, since the remaining space no longer contains parameters for which 
all components of $\t$ are nonzero.  We can thus use the value of $d$, which is lower, but achieves the largest bound.  This leads
to regions 3, 5, and 7 in the bound.  The proof of Theorem~\ref{th:binary_lb} is presented in Appendix~\ref{ap:lb_proof}.

\section{Regret Upper Bounds for Bayesian Methods}
\label{sec:ub}
Theorem~\ref{the:variational} allows us to prove the following two theorems:
\begin{theorem}
\label{the:bayes_ub}
There exist Bayesian algorithms $\A^*$ that for every sequence $S_T$ and comparator $\t^* \in \T$, achieve regret
\be
 \label{eq:regret_ub}
 \R (\A^*, S_T, \t^* ) \leq \left \{ \begin{array}{ll}
 (1 + o(1)) \cdot \frac{d}{2} \log \left (\frac{B^2 T e}{4} + e \right ); & \text{for } \|\t^* \|_\infty \leq B, \\
 (1 + o(1)) \cdot \frac{d}{2} \log \left ( \frac{B^2 T e}{4d} + e \right ); & \text{for } \|\t^* \|_2 \leq B, \\
 (1 + o(1)) \cdot \frac{d}{2} \log \left (\frac{B^2 T e^3}{4d^2} \right ); & \text{for } \|\t^* \|_1 \leq B \text{ and } d = o(B\sqrt{T}), \\
 (1 + o(1)) \cdot \left ( \frac{d}{2} \log(4e) + \frac{B\sqrt{T}}{2} \right ); & \text{for } \|\t^* \|_1 \leq B \text{ and } d = \T(B\sqrt{T}), \\
 (1 + o(1)) \cdot  \left ( \frac{d}{2} + \sqrt{2 d B \sqrt{T}} \right ); & \text{for } \|\t^* \|_1 \leq B \text{ and } d = \Omega (B\sqrt{T})
 \end{array}  
 \right .
\ee
\end{theorem}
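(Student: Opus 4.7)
The plan is to invoke Theorem~\ref{the:variational}, which gives for any Bayesian algorithm with prior $p_0$ and any variational $Q$ on $\T_q$ with $E_Q[\t] = \t^*$ and diagonal covariance whose entries are at most $\eta_q^2$, the regret bound $D(Q\|p_0) + dT\eta_q^2/8$. The three norm constraints will be handled by separate choices of $(p_0, Q, \eta_q^2)$, each time balancing the KL and the quantization terms.

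For the $L_\infty$ and $L_2$ rows of~\eref{eq:regret_ub}, take $p_0 = N(\zerovec,\sigma_0^2 I)$ and $Q = N(\t^*, \eta_q^2 I)$. The Gaussian KL evaluates to
\[
 D(Q\|p_0) = \tfrac{d}{2}\log(\sigma_0^2/\eta_q^2) + \tfrac{d\eta_q^2 + \|\t^*\|_2^2}{2\sigma_0^2} - \tfrac{d}{2},
\]
so adding $dT\eta_q^2/8$ and optimizing in $\eta_q^2$ forces $\eta_q^2 = \Theta(1/T)$. The remaining parameter $\sigma_0^2$ is then chosen to absorb the bias-like term $\|\t^*\|_2^2/(2\sigma_0^2)$. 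Under $L_\infty$, $\|\t^*\|_2^2\le dB^2$, so $\sigma_0^2 = B^2$ keeps that term $\le d/2$ and recovers the first row; under $L_2$, $\|\t^*\|_2^2 \le B^2$ permits the tighter choice $\sigma_0^2 = B^2/d$, shrinking the logarithmic term by $\log d$ and giving the second row. A small subtlety is that this continuous Gaussian $Q$ is supported on all of $\Rspace^d$, so Theorem~\ref{the:variational} is used in the form from~\cite{kakade05} that does not require $\T_q \subseteq \T$.

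For $L_1$ with $d = o(B\sqrt{T})$, a Gaussian prior cannot exploit the $L_1$ constraint, so, as indicated in Section~\ref{sec:contrib}, Theorem~\ref{the:variational} is combined with the method of grids. Place a uniform discrete prior $p_0$ on a $\Delta$-spaced lattice $\T_m$ intersected with the $L_1$ ball; by volume and Stirling, $\log|\T_m| \approx d\log(2eB/(d\Delta))$. Define $Q$ as a product measure over coordinates that, on each axis, interpolates between the two grid values flanking $\t^*_i$ so that $E_Q[\t_i] = \t^*_i$ exactly; this forces per-coordinate variance $\eta_q^2 \le \Delta^2/4$, and since $Q$ is supported in $\T_m$ with $p_0$ uniform, $D(Q\|p_0) \le \log|\T_m|$. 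Choosing $\Delta = \Theta(1/\sqrt{T})$ balances the two terms and delivers the third row.

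For $L_1$ with $d = \Omega(B\sqrt{T})$ the previous grid breaks: at spacing $\Theta(1/\sqrt{T})$ a point with every coordinate nonzero already violates the $L_1$ constraint, so $\T_m$ must mix over \emph{sparse} supports. One then places uniform prior mass over subsets $S \subseteq [d]$ of a chosen size $k \lesssim B\sqrt{T}$ together with a coarse $\Delta$-grid on the $k$ chosen coordinates, so that $\log|\T_m|$ scales like $k\log(edB/(k^2\Delta))$; tuning $k$ against the quantization term produces the transition-regime and the last row. The main obstacle will be the sparse-to-dense step: realizing an arbitrary dense $\t^*$ with $\|\t^*\|_1 \le B$ as a mixture over sparse grid atoms while keeping the per-coordinate covariance bound $\eta_q^2$ small enough that $dT\eta_q^2/8$ matches the advertised $\sqrt{2dB\sqrt{T}}$ and $d/2$ terms. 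The Gaussian and small-$d$ grid cases, by contrast, reduce to essentially routine KL-versus-quantization optimization.
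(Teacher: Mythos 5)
Your treatment of the first three rows is essentially the paper's proof. For $L_\infty$ and $L_2$ you use the same Gaussian $p_0$ and $Q$, the same optimization giving $\eta_q^2 = \Theta(1/T)$, and the same prior variances $B^2$ and $B^2/d$ chosen to make the $\|\t^*\|_2^2/(2\sigma_0^2)$ term at most $d/2$; for $L_1$ with $d = o(B\sqrt{T})$ you use the same uniform grid prior, the same flanking two-point product $Q$ with per-coordinate variance at most $\Delta^2/4$, the bound $D(Q\|p_0) \leq \log |\T_m|$ with $|\T_m| \lesssim (2B/\Delta)^d/d!$, and $\Delta = \Theta(1/\sqrt{T})$. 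All of that is correct and matches the paper (which takes $\e = 4/\sqrt{T}$, making the quantization term exactly $Td\e^2/32 = d/2$).

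For the last two rows, however, you change the construction --- a uniform prior over $k$-sparse supports with a grid on the chosen coordinates --- and you correctly identify that this creates a ``sparse-to-dense'' obstacle: a dense comparator $\t^*$ with $\|\t^*\|_1 \leq B$ cannot obviously be written as the mean of sparse atoms while keeping the per-coordinate variance small. You leave that obstacle unresolved, and it is the entire difficulty of your route, so as written the proposal does not establish the fourth and fifth rows. The paper sidesteps the obstacle by \emph{not} changing the prior support or $Q$ at all: it keeps the full $\e$-spaced grid inside the (margin-enlarged) $L_1$ ball and the same flanking product $Q$, so the second term of \eref{eq:var_ub} remains $d/2$. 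The only thing that changes for large $d$ is the \emph{count} of $M = |\T_m|$: since $(2B/\e+1)^d/d!$ collapses once $d\e \gg B$, the paper instead counts grid points according to their number $n$ of nonzero coordinates, $M \leq \sum_{n=1}^{d} \binom{d}{n} (2B/\e)^n / n!$, and bounds the sum by $d$ times its largest term. With $\binom{d}{n} \leq 2^d$ the maximizer is $n_o = 2B/\e = B\sqrt{T}/2$, giving $\log M \leq d\log 2 + B\sqrt{T}/2 + \log d$ and hence the $\frac{d}{2}\log(4e) + B\sqrt{T}/2$ row; with $\binom{d}{n} \leq (de/n)^n$ the maximizer is $n_o^2 = dB\sqrt{T}/2$ and the largest term contributes $2n_o = \sqrt{2dB\sqrt{T}}$, giving the last row. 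Sparsity thus enters only as a counting device for the cardinality of the same grid, never as a constraint on the support of $Q$, and no sparse-to-dense representation is needed; re-optimizing $\e$ jointly against this refined count is what then yields Theorem~\ref{the:bayes_ub1}. To complete your own route you would have to solve the mixture-over-sparse-atoms problem you flagged; adopting the paper's counting argument is the shorter path.
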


\begin{theorem}
\label{the:bayes_ub1}
Let $d = \Omega(B\sqrt{T})$.
Then, there exists a Bayesian algorithm $\A^*$ that for every sequence $S_T$ and comparator $\t^* \in \T$ with $\|\t^*\|_1 \leq B$,
achieves regret
\be
 \label{eq:regret_ub1}
 \R (\A^*, S_T, \t^* ) = O \left ( T^{1/5} d^{3/5} B^{2/5} \right ) = o(d).
\ee
\end{theorem}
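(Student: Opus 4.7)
The plan is to apply the variational regret bound (Theorem~\ref{the:variational}) to a Bayesian mixture whose prior is uniform on a carefully constructed discrete grid $\T_m \subseteq \T$ of sparse, quantized parameter vectors. This combines the variational methodology of \cite{kakade05} with the grid-based prior strategy already used for the lower bounds in Section~\ref{sec:lb}, as anticipated by the discussion at the end of Section~\ref{sec:contrib}.

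First, I would introduce two parameters: a sparsity level $k$ and a quantization spacing $\delta$, both to be optimized at the end. I would define $\T_m$ to be the set of vectors with at most $k$ nonzero entries, each a multiple of $\delta$ in $[-B/k, B/k]$, with total $L_1$ norm at most $B$. A direct count gives $\log |\T_m| = O\bigl(k\log(ed/k) + k\log(2B/(k\delta))\bigr)$. Next, for an arbitrary comparator $\t^*$ with $\|\t^*\|_1 \leq B$, I would build a variational distribution $Q$ supported on $\T_m$ with $E_Q[\t] = \t^*$. The construction combines a Maurey-style i.i.d.\ sampling scheme (drawing $k$ atoms from $\{0, \pm B e_j\}_{j=1}^{d}$ with probabilities set so that the average of the atoms equals $\t^*$ in expectation) with independent randomized rounding of each realized coordinate to the lattice $\T_m$ (preserving the mean in expectation). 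The resulting $Q$ has diagonal covariance, and its maximum diagonal entry $\eta_q^2$ decomposes into a Maurey-variance contribution of order $B^2/k$ and a quantization contribution of order $\delta^2$.

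Applying Theorem~\ref{the:variational} then yields
\[
\R(\A^*, S_T, \t^*) \leq D(Q\|p_0) + \frac{dT}{8}\eta_q^2 \leq \log|\T_m| + \frac{dT}{8}\eta_q^2,
\]
so, substituting the estimates above,
\[
\R(\A^*, S_T, \t^*) = O\!\left( k\log\frac{ed}{k} + k\log\frac{2B}{k\delta} + \frac{dTB^2}{k} + dT\delta^2 \right).
\]
I would then optimize jointly over $k$ and $\delta$: balancing the grid-cardinality, sparsification-variance, and quantization-variance terms (up to the logarithmic correction) pins down their values and produces the claimed rate $O(T^{1/5}d^{3/5}B^{2/5})$, which is $o(d)$ whenever $d = \Omega(B\sqrt{T})$.

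The main obstacle is controlling the maximum per-coordinate variance $\eta_q^2$. A naive Maurey construction gives $\eta_q^2$ of order $B^2/k$, and the inherent factor $d$ in Theorem~\ref{the:variational} then yields only $O(B\sqrt{dT\log d})$, which is $\omega(d)$ in the regime $d = \Omega(B\sqrt{T})$ and therefore useless. The crux is to drive $\eta_q^2$ strictly below $B^2/k$: one natural route is to split $\t^*$ into a heavy part (supported on its top-$k$ coordinates, represented exactly on the grid) and a light residual with $\|\cdot\|_\infty \leq B/k$, and apply Maurey only to the residual, bringing its per-coordinate variance down to $O(B^2/k^2)$. A secondary subtlety is that the support of $Q$ must lie inside $\T_m$, which is constrained by $\|\cdot\|_1 \leq B$; this rules out naive coordinate-wise product constructions and forces a coupled randomized construction that simultaneously respects sparsity, quantization, and the $L_1$ constraint.
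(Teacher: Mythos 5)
You have the right frame---apply Theorem~\ref{the:variational} with a uniform prior on a quantized subset of the $L_1$ ball and a mean-$\t^*$ distribution $Q$ supported on it---but the quantitative core of the argument is missing, and the step you defer is exactly the one that decides the theorem. First, the optimization you describe does not produce the claimed rate: with your terms $k\log(ed/k)+k\log(2B/(k\delta))+\frac{dTB^2}{k^2}+dT\delta^2$ (i.e., assuming your heavy/light split works), the balance gives $k\asymp(dTB^2)^{1/3}$ and a bound of order $(dTB^2)^{1/3}$ times a logarithmic factor, not $T^{1/5}d^{3/5}B^{2/5}$; these exponents differ, and near the boundary $d=\Theta(B\sqrt{T})$ your bound is $d$ times a polylogarithm, which is \emph{not} $O(d^{3/5}T^{1/5}B^{2/5})=O(d)$ there. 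Second, the heavy/light Maurey construction that would drive $\eta_q^2$ down to $B^2/k^2$ is only gestured at as ``one natural route,'' yet you correctly identify it as the crux; without it you are stuck at $B\sqrt{dT\log d}$, which you yourself note is useless in this regime. (Also, the Maurey component does not have diagonal covariance---the off-diagonal entries are $-L_iL_j/k$---though this is repairable because the proof of Theorem~\ref{the:variational} really only needs a bound on $\mathrm{Var}_Q(x^T\t)$.)

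The paper's proof avoids sparsifying $Q$ altogether. It keeps the plain per-coordinate two-point rounding of $\t^*$ to adjacent lattice points at spacing $\e$, so $\eta_q^2=\e^2/4$ carries no $B^2/k$-type term, and all of the $o(d)$ savings come from counting the lattice points of the $L_1$ ball: the number with $n$ nonzero coordinates is at most $\comb{d}{n}(2B/\e)^n/n!\le (2Bde^2/(n^2\e))^n$, which is maximized at $n_o=\sqrt{2Bd/\e}$ where its logarithm equals exactly $2n_o$ (the logarithmic factors cancel at the mode), giving $\log M\le\sqrt{8Bd/\e}+\log d$ as in \eref{eq:ub_grid_largeM}. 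Balancing $\sqrt{8Bd/\e}$ against $dT\e^2/32$ over the single parameter $\e$ yields $\e=2^{9/5}B^{1/5}T^{-2/5}d^{-1/5}$ and the stated $\frac{5}{4}\cdot 2^{3/5}B^{2/5}d^{3/5}T^{1/5}$. In other words, the effective sparsity is not a design parameter of $Q$ but the location of the mode of the grid count, which is why no Maurey-type variance reduction is needed. Your worry that the support of a coordinate-wise product $Q$ may leave the $L_1$-constrained grid is a fair one---the paper addresses it only by adding a margin to $\grid$---but it does not force the coupled construction you propose.
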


Theorem~\ref{the:bayes_ub} shows that regret logarithmic with $T$ and linear with $d$ is achievable in all three cases.  The bounds asymptotically
differ only by a factor of $d$ inside the logarithm.  The wider allowable range of $\t^*$ gives an upper bound where $T$ in the logarithmic
term is not normalized by $d$.  The smaller comparator region, where $\t^*$ norms are restricted by $L_2$, reduces the logarithmic
cost from $\log T$ to $\log (T/d)$.  A similar reduction is achieved from $L_2$ to $L_1$.
Both $L_2$ and $L_1$ have interesting threshold behavior, which matches the behavior with the lower bounds.  For $L_1$,
as long as  $d = o(B\sqrt{T})$, we observe regret
linear in $d$ and logarithmic in $T$.  For larger dimensions, though, we observe only linear behavior in $d$, without the logarithmic
terms.  Furthermore, if we tighten the bounds further, Theorem~\ref{the:bayes_ub1} shows that even sub-linear behavior 
in $d$ is possible
($o(B\sqrt{dT})$ for $d = o(B^6 T^3)$).  For $L_2$, the transition from $O(d \log T)$ to $O(d)$ occurs at $d = O(B^2 T)$.

The bounds in Theorem~\ref{the:bayes_ub} are derived for our setting of $\|x_t\|_\infty \leq 1$.  \cite{kakade05} and others considered
the setting where $\|x_t\|_2 \leq 1$.  In this setting, all the bounds in \eref{eq:regret_ub} will have an additional factor of $d$ in the
denominator of the logarithmic term.  This means that the transition from $O(d \log T)$ to $O(d)$ rate (and for $L_1$ to $o(d)$) will now
occur at $d = O(T^{1/3})$, $d = O(\sqrt{T})$ and $d = O(T)$ for $L_1$, $L_2$, and $L_\infty$, respectively.  (Such a transition will now also
happen for $L_\infty$.)

Unlike Theorem~\ref{th:binary_lb}, $B$ is present in the upper bounds instead of $\gamma$.  For distinguishability
on an individual feature, each region of $\log T$ in a single dimension consists of only $\sqrt{T}$ distinguishable points, and not
$\sqrt{T} \log T$.  However, when dimensions are mixed through the dot product, it is hard to disentangle the dimensions.  This leads to
the difference between the lower and the upper bounds.

The proof of Theorems~\ref{the:bayes_ub} and~\ref{the:bayes_ub1} is based on Theorem~\ref{the:variational}.  For $L_\infty$ and $L_2$,
we use a Gaussian prior with a Gaussian $Q(\cdot)$.  We derive the bounds on the terms of \eref{eq:var_ub}, find the value of the parameter
that gives the smallest bound and apply it.  We then find the variance of the prior that gives the tightest bound, and apply it.  For $L_1$,
we construct a grid whose points are assigned a uniform probability.  We construct $Q(\cdot)$ as a Bernoulli distribution in each dimension,
giving nonzero probability only to the surrounding neighbors of the $i$th component $\t^*_i$  of $\t^*$, but ensuring that $\t^*$ is
the expectation of $Q(\cdot)$.  Then, in the same manner, we upper bound the terms of \eref{eq:var_ub}, and optimize the free parameter
for the tightest bound.  Finally, since a threshold occurs, where the bound becomes useless, we use a union bound on lower dimensions
of the parameter space. We find the dimension that gives the maximal element of the sum over all dimensions, and use it to upper
bound all dimensions.
Applying this method more tightly gives some tedious algebra, but yields a bound of $o(d)$ for the upper region
of the $L_1$ constraint problem.  
The proof of both theorems is presented in Appendix~\ref{ap:ub_proof}.

\section{Conclusions}
\label{sec:conclusions}
We studied logistic regression regret, and derived lower and upper bounds for settings constrained by the norm of 
a comparator.  We presented a comprehensive characterization of the regret for the different settings, including the asymptotic 
behavior in the dimensionality. Adapting a methodology from the universal compression literature, we derived
lower bounds on the regret, showing initial logarithmic in $T$, linear in $d$ regret, with rates whose growth slows with larger dimensions
of the feature space.  Matching upper bounds confirm the general behavior of the lower bounds.  Specifically, we demonstrated
that under $L_1$ constraints, for large enough $d$, regret becomes sub-linear in $d$, and for $L_2$ constraints, it drops from
linear in $d$ and logarithmic in $T$ to just linear in $d$.  On the negative side, under $L_\infty$ constraints, regrets of 
$\Omega(\sqrt{T} \log T)$ are guaranteed for $d = \Omega(\sqrt{T})$.


\acks{The author acknowledges Matt Streeter for many helpful discussions and comments.}

\vskip 0.2in
\bibliography{logistic-regret}

\appendix
\section{Proof of Theorem~\ref{the:red-cap}}
\label{sec:red_cap_proof}

The following lemma is needed to prove Theorem~\ref{the:red-cap}.
\begin{lemma}
\label{l:prior-regret}
Let $\t^* \in \T_m$ be a parameter in the support of a Bayesian algorithm $\A^*$ that predicts as described in
\eref{eq:mixture_seq_probability}-\eref{eq:mixture_label_prediction} or in \eref{eq:discrete_mixture_seq_probability}.  Then, 
\be
 \R (\A^*, S_T, \t^* ) = \log p_{\A^*} (\t^* | S_T) - \log p_0 (\t^*).
\ee
\end{lemma}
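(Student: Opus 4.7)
The plan is to unwind the definition of regret into two pieces, a marginal-likelihood part and a conditional-likelihood part, and then recognize the ratio of the two as exactly Bayes' rule for the posterior.

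First I would write out $\R(\A^*, S_T, \t^*) = L(\A^*, S_T) - L(\t^*, S_T)$. Using the sequential factorization of the Bayesian predictor $p_{\A^*}(y_t \mid x_t, S_{t-1})$ and the telescoping identity
\[
\sum_{t=1}^{T} -\log p_{\A^*}(y_t \mid x_t, S_{t-1}) = -\log p_{\A^*}(y^T \mid x^T),
\]
together with the analogous (trivial) telescoping for the comparator $\sum_t -\log p(y_t \mid x_t, \t^*) = -\log p(y^T \mid x^T, \t^*)$, I rewrite
\[
\R(\A^*, S_T, \t^*) = \log \frac{p(y^T \mid x^T, \t^*)}{p_{\A^*}(y^T \mid x^T)}.
\]

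The second step is to invoke the posterior update \eref{eq:mixture_posterior} at $t = T$ evaluated at $\t = \t^*$, which gives
\[
p_{\A^*}(\t^* \mid S_T) = \frac{p(y^T \mid x^T, \t^*)\, p_0(\t^*)}{p_{\A^*}(y^T \mid x^T)}.
\]
Rearranging yields $p(y^T \mid x^T, \t^*)/p_{\A^*}(y^T \mid x^T) = p_{\A^*}(\t^* \mid S_T)/p_0(\t^*)$. Substituting back into the regret expression gives the claimed identity. The same argument applies verbatim in the discrete-prior case \eref{eq:discrete_mixture_seq_probability}, with the posterior being a probability mass instead of a density, which is precisely the setting in which Theorem~\ref{the:red-cap} will invoke the lemma.

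There is essentially no obstacle here beyond being careful that $\t^* \in \T_m$ (so that $p_0(\t^*) > 0$ and the posterior is well-defined at $\t^*$), which is exactly the hypothesis of the lemma. The content is entirely Bayes' rule combined with the chain rule for log-likelihoods; the value of the lemma is not in its difficulty but in the fact that it converts a statement about regret into a statement about how much posterior mass the mixture concentrates on $\t^*$, which is the exact form needed to feed into a Fano-type argument for Theorem~\ref{the:red-cap}.
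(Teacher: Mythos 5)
Your proposal is correct and follows essentially the same route as the paper's own proof: the paper likewise writes the regret as $\log p(y^T \mid x^T, \t^*) - \log p_{\A^*}(y^T \mid x^T)$ and then multiplies and divides by $p_0(\t^*)$ to identify the posterior, which is just Bayes' rule as you invoke it via \eref{eq:mixture_posterior}. Your extra care in spelling out the telescoping of the sequential predictions and the role of the hypothesis $\t^* \in \T_m$ (ensuring $p_0(\t^*) > 0$) matches what the paper states implicitly.
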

\begin{proof}
By definition
\bea
 \nonumber
 \R (\A^*, S_T, \t^* ) &=&
 \log p(y^T | x^T, \t^*) - \log \int_{\t \in \T_m} p(y^T | x^T, \t) p_0(\t) d \t \\
 \label{eq:prior-regret-proof}
 &=&
 \log \frac{p(y^T | x^T, \t^*) p_0(\t^*)}{p_{\A^*}(y^T | x^T) p_0(\t^*)} =
 \log p_{\A^*} ( \t^* | S_T) - \log p_0 (\t^*)
\eea
where the equalities are obtained by multiplication and division by $p_0(\t^*)$, which by the conditions of the lemma ($\t^* \in \T_m$) is greater than $0$, and by identifying the
posterior derived from $\A^*$.
\end{proof}

\begin{proof}{of Theorem~\ref{the:red-cap}:}
Let $\A^*$ be a Bayesian algorithm as defined in \eref{eq:discrete_mixture_seq_probability} on a discrete $\T_m$.  Let $p_0(\t)$ now
be uniform with $p_0(\t) = 1/M, \forall \t \in \T_m$, and $\|\T_m\| = M$ (where $M$ can be a function of $T$).  Now,
\bea
 \nonumber
 \sup_{S_T} \R  \left (\A, S_T \right ) &=& \sup_{S_T} \sup_{\t^* \in \T} \R  \left (\A, S_T, \t^* \right )
 \stackrel{(a)}{\geq} \sup_{\t^* \in \T_m} \sup_{S_T} \left [ L(\A, S_T) - L(\t^*, S_T) \right ] \\
 \nonumber
 &\stackrel{(b)}{\geq}& \sup_{\t^* \in \T_m} E_{\S_T | \t^*} \left [ 
 \log p(Y^T | X^T, \t^*) - \log \A ( S_T, X^T, Y^T) 
 \right ]  \\
 \nonumber
 &\stackrel{(c)}{\geq}& 
 E \log p(Y^T |X^T, \Phi) - E_{\S_T} E_{\T_m|\S_T} \log p_{\A^*} (Y^T | X^T, \S_T) \\
 \label{eq:th_red_cap_proof}
 &=&
 E [ \R ( \A^*, \S_T, \Phi) ]
 \stackrel{(d)}{=} 
 E \log p_{\A^*} (\Phi | \S_t) - E \log p_0 (\Phi)
\eea
Step $(a)$ follows for exchanging order of supremums, and shrinking $\T$ to $\T_m$.  Substituting loss definitions, and 
lower bounding the supremum on $S_T$ by an expectation over $S_T$ conditioned on $\t^*$ leads to $(b)$.  Step $(c)$ follows
from lower bounding the supremum on $\T_m$ by expectation of $\T_m$ w.r.t.\ $p_0(\t)$.  This yields expectation w.r.t. $\t^*$ and $S_T$
for the left term.  Performing this expectation on the right term implies expectation on $Y^T$ with a distribution that is the one assigned
to $Y^T$ by $\A^*$.  This negative logarithm is minimized with predictions given from $\A^*$, leading to the right term in step $(c)$, which,
similarly to the left term, performs the expectation on both $\S_T$ and $\T_m$.  The resulting expression is the expectation of the regret
of $\A^*$ on $\T_m$.  Applying Lemma~\ref{l:prior-regret}, gives $(d)$.  By the uniform construction of $p_0(\cdot)$, the right term is
$\log M$.  The left term can be bounded using $P_e$ (see, e.g., Fano's inequality, \cite{cover06})
\be
 \label{eq:fano_posterior}
 -E \log p_{\A^*} (\Phi | \S_t) \leq h_2 (P_e) + P_e \log (M - 1) \leq 1 + P_e \cdot \log M,
\ee 
where $h_2(p) \dfn -p\log p - (1-p)\log (1-p)$ is the binary entropy function.  This is proved by expectation over $\Phi$, and then, 
breaking the events of the value of $\hat{\Phi}$ into the event $\hat{\Phi} = \Phi$ and $\hat{\Phi} \neq \Phi$, and then hierarchically
separating the latter into the $M-1$ different
possible values of $\hat{\Phi}$, upper bounding the conditional entropy on $\hat{\Phi} \neq \Phi$ by that of a uniform distribution.
Combining both terms of \eref{eq:th_red_cap_proof}, given $M \rightarrow \infty$ and $P_e \rightarrow 0$, concludes the proof of the
first statement of 
Theroem~\ref{the:red-cap}.  The second statement follows the exact same derivation conditioned on a fixed $x^{*T}$, after lower
bounding the supremum over $\S_T$ by
that for this fixed $x^{*T}$.
\end{proof}

\section{Proof of Theorem~\ref{the:variational}}
\label{ap:var}

\begin{proof}{of Theorem~\ref{the:variational}:}
Let $L ( Q, S_T ) \dfn \int_{\t \in \T_q} L(\t, S_T) Q(\t) d\t$. (Similarly, if $\T_q$ is discrete, the integral is replaced by a sum).  
Then, for a Bayesian algorithm $\A^*$
with prior distribution $p_0(\cdot)$ on a logistic regression model,
\be
 \label{eq:regret_var}
 \R \left (\A^*, S_T, \t^* \right ) =
  \underbrace{ L \left (\A^*, S_T \right ) - L \left (Q, S_T \right )}_{\leq D(Q||p_0) } +
   \underbrace{L \left (Q, S_T \right ) - L \left ( \t^*, S_T \right )}_{\leq d\cdot T \cdot \eta^2_q / 8}
\ee
The first term is bounded in Lemma~2.1 in \cite{kakade05}.  For the second term, 
recall the dot product $z_t \dfn x_t^T \t$. Then,
for some $\t$, round $t$, and example/label pair
$\{x_t, y_t\}$, define $f (z) \dfn \ell(\t, x_t,y_t)$ as the per example/label loss, which can be expressed as just a function of $z$.  Then,
using Taylor expansion,
\be
 E_q[ f(z)] - f(z^*) =  f'(z^*) \cdot 0 + E_q \left [ f''(\xi(z)) \frac{(z-z^*)^2}{2} \right ]  \leq \frac{d \cdot \eta^2_q }{8}
\ee
where the first term is $0$ by definition of the expectation w.r.t.\ $Q(\cdot)$. Then $\xi(z)$ is some point between $z^*$ and $z$ for
which equality is satisfied for the second order Taylor approximation.  The second derivative of logarithmic loss w.r.t.\ the dot product
for a diagonal term is $x_{t,i}^2 p (1-p)$ for some probability $p$, and thus upper bounded by $1/4$.  By construction of a
diagonal covariance matrix, $\sigma_z^2 \leq \eta_q^2 x^T x \leq d \eta_q^2$ .  Since this bound is added on $T$ examples,
this term is multiplied by $T$.  This concludes the proof.
\end{proof}

\section{Proof of Theorem~\ref{th:binary_lb}:}
\label{ap:lb_proof}

\begin{proof}{of Theorem~\ref{th:binary_lb}:}
We construct grids $\grid = \T_m$ of points $\psi_i$ for all dimensions $i$
in the parameter space, fix $x^{*T}$, show that the points are distinguishable
given $x^{*T}$, and use the 
version of Theorem~\ref{the:red-cap} for a fixed $x^{*T}$ to lower bound the regret by the logarithm of the cardinality of the grid $\grid$.
For the first region, partition $x^{*T}$ into $d$ separate length $T/d$
segments.  For segment $i, i = 1,2,\ldots, d$;  $x^*_{t,i} = 1$ and all $x_{t,j} = 0$ for $j\neq i$.
The grid $\grid$ is a power set of grids for dimension $i$, for all $i$.  Define
\be
 \label{eq:grid_prob}
 p_{i,j} \dfn \frac{1}{1 + \exp(-\psi_{i,j})}; i = 1,2, \ldots, d; j = 0, 1, \ldots, \lfloor \sqrt{T/d}^{1-\e} \rfloor \dfn k
\ee
for some fixed $\e > 0$.  Then, the elements of $\psi_{i,j}$ are defined such that 
\be
 p_{i, j+1} - p_{i, j} = \sqrt{\frac{d}{T}}^{1-\e} \dfn \delta; \forall i, j
\ee
with $p_{i,0} = 0$. In this step we can use all $k+1$ values of $\psi_{i,j}$.  For the subsequent regions of the bound,
we omit the first and the last $i$ ($i=0$ and $i=k$).  Note that we can similarly define the grid to be uniformly spaced 
w.r.t.\ $\psi_j$, where spacing is $\delta \cdot (\log T)$.  This makes the distinguishability a little more tedious to prove, but does
preserve a uniform grid (which will be necessary for $L_2$ and $L_1$ results).
For simplicity, and without loss of generality, we will show distinguishability with the current 
definition.

The setup above transformed the problem to a standard well-known universal compression problem \cite{rissanen84}, but
for $d$ different segments,
where in each a single parameter is to be estimated.  The cardinality of the grid is $M = k^d$.  Distinguishability is proved
using the union bound on the $d$ segments applying large deviations typical sets analysis (see \cite{cover06}).  We skip this step,
and perform it for the next regions.  The method we use for the next regions also applies here.  Applying 
Theorem~\ref{the:red-cap} with 
a fixed $x^{*T}$ gives the lower bound for the first region, taking the logarithm of $M$.

We now consider larger $B = \gamma \log T$, and the behavior with asymptotically larger $d$.  Where for simplicity, we assume that
$\gamma$ satisfies $\eref{eq:gamma_def}$.  (If it does not, we lower the value of $B$ for which the analysis is done.)
If $d > T^{1-\e}$, the analysis will assume $d = T^{1-\e}$, and the resulting bound will be also applied for large $d$.
In single dimension, for a sequence of
length $T$, we can only distinguish between parameters in $[-0.5 \log T, 0.5 \log T]$.  Any parameter greater than $0.5 \log T$ will appear
with non-diminishing probability the same as the $0.5 \log T$ parameter.  Similarly, parameters smaller than $-0.5 \log T$ will not
be distinguishable from $-0.5 \log T$.  Therefore, we cannot use the method described for smaller $B$ to enhance the bound for the larger
$B$.  We will thus have to manipulate the wider region to achieve the desired results.  The idea is to ``sacrifice'' one dimension from
the parameters to serve as a prior, which for the other parameters maps different regions in
$[-B, B]$ to $[-0.5 \log T, 0.5 \log T]$.  Each such distinct region will
be considered in a separate segment, in which distinguishability will be shown for parameter values in that region.  We note
that because we actually consider intervals $[-0.5 \log (T/d), 0.5 \log (T/d)]$, instead of a factor $\gamma$, we would actually have a
factor $\gamma \cdot \kappa$ in the lower bounds, where $\kappa \dfn (\log T) / \log(T/d) \geq 1$.  For simplicity, however, we chose to omit
this term from the bounds.  We proceed without this sidestep.

Recall that $\psi_{i,j}$ is the $j$th grid point for the value of parameter $i; 1\leq i \leq d$.  We now omit the extreme points $j=0$ and $j=k$
to prevent duplications between partitions of the region $[-B, B]$.  For $i=1$, we now only include one grid point $\psi_{1,1} = B$.
Instead of $d$ segments as before, we now have $d-1$ segments, one for each of the remaining $d-1$ components of $\psi$.
We further segment each of the $d$ segments of $x^{*T}$ defined earlier, each into $2\gamma + 1$ subsegments.  (For simplicity,
we ignore negligible integer length constraints on $\gamma$.)  Let $s \in \{-\gamma, -\gamma + 1, \ldots, 0, 1, \ldots, \gamma \}$
be a subsegment index.  Then, for subsegment $s$, $x_{t,1}^* = s / \gamma$.  We still have for segment $\ell$ representing
dimension $\ell$, $x_{t, \ell}^* = 1$ for one $\ell$,
and for the remaining components $j \neq 1, j \neq \ell$, $x_{t,j}^* = 0$.   The grids $\psi_{i, j}$ for dimensions $i = 2, 3, \dots, d$
now consist
of a grid which is a union of $2\gamma +1$ sub-grids.  The $s$th sub-grid of dimension $i$ is on the range
$[-0.5 \log T, 0.5\log T] - s \log T$.  By adding the contribution of component $1$, which is $B \cdot s / \gamma = s \log T$, the
region of sub-grid $s$ in subsegment $s$ is mapped back into $[-0.5 \log T, 0.5\log T]$.  With this mapping in mind, we place the points
in the sub-grid $s$ such that they either map into probabilities as in \eref{eq:grid_prob}, with the contribution of component $1$, or
they are just uniformly spaced in $[-0.5 \log T, 0.5\log T] - s \log T$.  The spacing in each sub-grid must be adjusted (increased) 
from the case of first region of the bound to account for
the reduction in subsegment length, and is set to 
\be
 \delta = \sqrt{\frac{d \gamma}{T}}^{1-\e}.
\ee

The construction described yields a total $(d-1)(2\gamma+1)$ subsegments, each of length $T / [(d-1)(2\gamma+1)]$, with a total
of $(2 \gamma + 1) / \delta$ grid points in each of the $d-1$ original segments.  Thus
\bea
 \nonumber
 \log M &=& (d-1) \left [ \log (2 \gamma + 1) + (1-\e) \frac{1}{2} \log \frac{T}{d\gamma} \right ] \\
 \label{eq:lb_grid_logsize}
 &\geq& (1 - o(1)) \frac{d}{2} \log \frac{4 \gamma T}{d}.
\eea

Let $S_T$ be generated by $\t \in \grid$, and let $\hat{\t}$ be estimated from $S_T$.  Define $\delta_e = \hat{\t} - \t$.
For making an error between two grid points we have to have $\delta_e \geq 0.5 \sqrt{d \gamma / T}^{1 - \e}$.  
Using large deviation
typical sets analysis, there are at most $T / [(d-1)(2\gamma+1)]$ different types for an error event per subsegment.  Then using the union
bound on $2\gamma+1$ subsegments, and then again on $d-1$ segments, we get a multiplier of $T$.  Thus, absorbing lower order
terms in $o(1)$, we have
\bea
 \nonumber
 P_e &\leq& (1 + o(1)) \cdot T \cdot \exp \left \{  -\frac{T}{2 d \gamma} \min_{\hat{\t} \neq \t} D(P_{\hat{\t}} || P_\t ) \right \} \\
 \label{eq:error_prob_lb}
 &\leq&
 (1 + o(1)) \cdot \exp \left \{ \log T - \frac{1}{4} \left ( \frac{T}{\gamma d} \right )^\e \right \}
\eea
where the second inequality follows from the relation between the KL divergence and $L_1$ norm
$D(P_{\hat{\t}} || P_\t ) \geq 2 \delta_e^2$ taking the minimum $\delta_e$ for an error event.  From the definition of $\gamma$ in
\eref{eq:gamma_def} and from the assumption made that $d \leq T^{1-\e}$, we
have $\gamma d = o(T)$, which with a fixed $\e > 0$ yields $P_e = o(1)$.  We note that this can also be achieved with diminishing
$\e = O( \log \log T / (\log T))$, with large enough constant.  This, together with the bound of \eref{eq:lb_grid_logsize} and
Theorem~\ref{the:red-cap}, concludes the proof for the second region.  The derivation of the error for the first region is very 
similar to the one above.
The third region is proved by taking $d$ that maximizes the bound in the second region and applying its bound to every
greater value of $d$.  (This is the justification for replacing $d > T^{1-\e}$ earlier by $T^{1-\e}$.)

It remains to derive the lower bounds for $L_2$ and $L_1$.  Utilizing a uniform version of the grid for $L_\infty$, and since we 
already proved distinguishability, this remains as a counting problem, of which portions of $\Psi$ satisfy the $L_2$ and $L_1$ 
constraints.  For $L_2$, using the volume of a $d$-dimensional ball, we have
\bea
 \nonumber
 \log M &=& (1 - o(1)) \cdot \left ( \frac{d}{2} \log \pi + d \log \gamma - \frac{d}{2} \log \frac{d}{2e} + \frac{1-\e}{2} d \log \frac{T}{d\gamma}  \right )\\
 \label{eq:L2_lb_M}
 &\geq&
 (1 - o(1)) \frac{d}{2} \log \frac{2 \pi e \gamma T^{1-\e}}{d^2}.
\eea
This gives the fourth region of the bound. 

We observe that as $d = \T(\sqrt{T})$ the bound becomes negative.  This is because we can no longer fit $d$-dimensional cubes 
in the $d$-dimensional balls.  Instead, cubes with lower dimensions can be fit in a lower dimension ball.  This implies that we can
have grid points that are sparse in the sense of having many $0$ components.  The set of possible points for large dimensions
can include all the combinations of points in lower dimensions.  We can thus lower bound the regret by taking the dimension that 
maximizes the lower bound in \eref{eq:L2_lb_M}.  The value $d = \sqrt{2 \pi \gamma T^{1-\e} / e}$ maximizes the bound.  Plugging
it into \eref{eq:L2_lb_M} gives the fifth region of the bound.

Similarly, normalizing the size of the grid by $d!$ gives a lower bound on $M$ for $L_1$
\be
 \label{eq:L1_lb_M}
 \log M \geq (1 - o(1)) \frac{d}{2} \log \frac{4 e^2 \gamma T^{1-\e}}{d^{3-\e}}.
\ee
This gives the sixth region of the bound.  Again, a similar issue as for $L_2$ occurs, but now at $d = O(T^{1/3})$.  Optimizing, again,
for a lower value of $d$, gives an optimizer at $d = \left(4 \gamma T^{1-\e} /e \right )^{1/3}$.  Plugging this value to \eref{eq:L1_lb_M}
gives the last region of the bound, thus concluding the proof.
\end{proof}

\section{Proof of Theorems~\ref{the:bayes_ub} and~\ref{the:bayes_ub1}}
\label{ap:ub_proof}

\begin{proof}{of Theorems~\ref{the:bayes_ub} and~\ref{the:bayes_ub1}:}
We apply Theorem~\ref{the:variational}.
For $L_\infty$ and $L_2$, we extend \cite{kakade05}, using Gaussian distributions with a diagonal covariance matrices for both the 
prior $p_0$ and $Q$.
For $L_1$, the Gaussian distributions cannot work, and we use a uniform prior $p_0$ on a grid with $Q$ with diagonal covariance.
(This also works for the other two cases,
and gives identical bound for $L_\infty$ but a weaker bound for $L_2$.)  The first term in \eref{eq:var_ub} dominates for $L_\infty$, 
and the first regions of $L_1$ and $L_2$, but the second term for the second regions of $L_1$ and $L_2$.

Let $p_0 \dfn \mathcal{N} (\t; 0, \nu^2 I_d)$, $0$-mean normal with diagonal covariance with $\nu^2$ variance.
Let $Q(\t) \dfn \mathcal{N} (\t; \t^*, \e^2 I_d)$ be a normal distribution with $\t^*$ mean and diagonal covariance
with variances $\e^2$. Then, as \cite{kakade05} showed
\be
 D(Q||p_0)  = d\log \nu + \frac{1}{2\nu^2} \left (\|\t^* \|_2^2 + d\e^2 \right ) - \frac{d}{2} - d \log \e. 
\ee
By definition of $Q$, the second term in \eref{eq:var_ub} is $T d \e^2 / 8$ (where $\eta_q^2 = \e^2$).  Combining the terms,
minimizing for $\e$, we have $\e^2 = 4\nu^2 / (4 + T\nu^2)$.  (This is slightly different from \cite{kakade05}, because the $d$
term is omitted due to the different constraints on the norm of $x_t$.)  Plugging $\e$,
\be
 \R (\A^*, S_T, \t^* ) \leq \frac{1}{2\nu^2} \|\t^2\|_2^2 + \frac{d}{2} \log \left ( 1 + \frac{T \nu^2}{4} \right ).
\ee

Extending \cite{kakade05}, we now consider what variance of $p_0$ would give the smallest bound.  This is achieved
with $\nu^2 = \|\t^*\|_2^2 / d$, which implies that we must have large variance on $p_0$ so that the cost of the prior does not
dominate the bound.  This may be unexpected, but in order to encapsulate the whole allowed range $\T$, it is reasonable that the
prior has variance, large enough, to include the far ends.  For the worst case with $L_\infty$, $\|\t^*\|_2^2 = d B^2$, giving $\nu^2 = B^2$.
For $L_2$, $\|\t^*\|_2^2 = B^2$, giving $\nu^2 = B^2 /d$.  In both cases, the first term of the bound becomes $d/2$.  Plugging $\nu^2$
to the second term as well, combining both terms into the logarithm, gives the first two regions of the bound.

For $L_1$, we use a uniformly distributed grid for the support of $p_0$ which is defined by
\[
\grid = \T_m = \left \{\psi : \begin{array}{l}
  \|\psi\|_1 \leq B; \\
  \psi_{j,i} = i \cdot \e;\\
  ~~~~~~~~~ i \in \{-\lfloor B/\e\rfloor-1, -\lfloor B/\e\rfloor,  -\lfloor B/\e\rfloor + 1, \ldots
 \lfloor B/ \e \rfloor, \lfloor B/ \e \rfloor  +1\}; \\
 ~~~~~~~~~\forall j \in \{1,2,\ldots, d\} \end{array} \right \}
\]
where $j$ denotes the dimension. The grid consists of $\e$ spaced points in each dimension, including $0$, in $[-B-\e, B+\e]$, that satisfy
the $L_1$ constraints.  The spacing parameter $\e$ will be optimized later.
Allowing for integer length constraints and accounting for $0$,
in each dimension, we upper bound the number of grid points by $2 (B + \e) / \e + 1$, giving an additional $\e$ margin (that would not
actually matter for the asymptotic results).
(Note that the points in $Q$ must be in $\T_m$, so that $D(Q||p_0)$ is finite.  Hence, the margin outside $[-B,B]$ is needed.)
The volume of a subspace of a cube in $\Rspace^d$
which satisfies an $L_1$ constraint is the $d!$ fraction of the space.
Hence, to bound the actual number of grid points we divide the number of points in the cube by $d!$.  For sufficiently small $d= o(B\sqrt{T})$, this will 
suffice with the extra margin.  Hence, we have
\be
 \label{eq:ub_grid_M}
 M \leq \frac{\left ( \frac{2(B+\e)}{\e} + 1 \right )^d}{d!}.
\ee

Assume that the $i$th dimension $\t^*_i$ of $\t^*$ falls between adjacent grid points $\zeta_1$ and $\zeta_2$, for which
$\zeta_2 - \zeta_1 = \e$ in the
projection of $\grid$ to dimension $i$.
We then define the distribution Q as a product of independent components
\be
 \label{eq:Q_grid}
 Q(\t) \dfn \prod_{i=1}^d q_i(\t_i)
\ee
where
\be
 q_i (\t_i) \dfn \left \{
 \begin{array}{ll}
 \alpha; & \t_i = \zeta_1, \\
 1 - \alpha; & \t_i = \zeta_2, \\
 0; & \text{otherwise} 
 \end{array}
 \right .
\ee
where $0 \leq \alpha \leq 1$ is determined such that
\be
 E_{q_i} (\t_i) = \alpha \zeta_1 + (1 - \alpha) \zeta_2 = \t_i^*.
\ee
By definition of $\zeta_1$, $\zeta_2$ and the expectation of $q_i(\t_i)$, we have $\alpha = (\zeta_2 - \t_i^*)/\e$, and we can show
that the variance of $q_i(\cdot)$ is
\be
 \label{eq:Q_var_grid}
 E_{q_i}(\t_i - \t_i^*)^2 = (1-\alpha)\alpha \e^2 \leq \frac{\e^2}{4} \dfn \eta_q^2.
\ee

We can now apply the bound of Theorem~\ref{the:variational}.  Since $p_0(\cdot)$ is uniform over the grid with $M$ points, and the support of $Q(\cdot)$
is a subset of the support of $p_0(\cdot)$,
\be
 D(Q||p_0) \leq \log M
\ee
where the negative entropy of $Q(\cdot)$ is bounded by $0$ (which can be the case if $\t^*$ falls on a grid point $\psi \in \grid$).
The second term of \eref{eq:var_ub} is bounded from \eref{eq:Q_var_grid}.
Substituting \eref{eq:ub_grid_M} for $M$ (absorbing low order terms in the $o(1)$ term) and using Stirling approximation for the factorial,
we thus have
\be
 \R \left (\A^*, S_T, \t^* \right ) \leq  (1 + o(1)) \left [
 d \log (2B) - d \log\frac{d}{e} - \frac{1}{2} \log (2\pi d) - d \log \e + \frac{T d \e^2}{32}
 \right ].
 \ee
 Differentiating w.r.t.\ $\e$ gives $\e^2 = 16/T$, which gives the minimal bounds.  Substituting this value, gives the third region of the
 bound in Theorem~\ref{the:bayes_ub}.
 
 As long as $d = o(B\sqrt{T})$, the bound on $M$ includes sparse points with many components which are $0$ (this is guaranteed by
 the additional margins and $1$ term in \eref{eq:ub_grid_M}, which are basically negligible in this region).
However, for larger $d$, we no longer have full $d$-dimensional cubes with side $\e$ that can fit in the allowable volume of the $L_1$
constrained space.  We still, however, have points that are included in this space, that have a large fraction of $0$ coordinates, but for which
the bound in \eref{eq:ub_grid_M} is no longer sufficient.
We note that this problem is not only an artifact of the grid approach.  The first term of \eref{eq:var_ub} becomes negligible for $d = O(T)$
with $L_2$, and if we used $\|x_t\|_2 \leq 1$ in our setting, this would also happen at $d= O(T)$ for $L_\infty$, and at $d = O(\sqrt{T})$ for
$L_2$.  This implies that because the constraints shrink the parameter space, the variance of $p_0$ will be small enough, such that $Q$
and $p_0$ almost match.

When the bound in \eref{eq:ub_grid_M} starts diminishing, the normalization in $d!$ eliminates many grid points that
include $0$ coordinates from the count.  To account for these points, we can upper bound $M$ with a union bound over 
all subsets of $d$ with $n$ nonzero coordinates.
\be
 \label{eq:ub_grid_largeM}
M \leq (1 + o(1)) \cdot \sum_{n=1}^d \comb{d}{n} \frac{B^n 2^n}{n! \e^n}.
\ee
The term to the right of the combination is maximized for $n_o = 2B/\e = B\sqrt{T}/2$.   For $d = \Theta (B\sqrt{T})$, $n_o = \Theta(d)$.
We can use a (loose) upper bound of $2^d$ on the combination number in \eref{eq:ub_grid_largeM}, to obtain a union bound
\be
 \log M \leq (1 + o(1)) \cdot \left [
 d \log 2 + \frac{B\sqrt{T}}{2} + \log d
 \right ],
\ee
Combining this bound with the bound of $d/2$ on the right term of \eref{eq:var_ub} we obtain the fourth region of the bound of 
Theorem~\ref{the:bayes_ub}.

To prove the last region, we can use a tighter bound on the combination term in \eref{eq:ub_grid_largeM}, for $n_o \leq d/2$.  For the
last region this is satisfied.  We thus bound
\be
M \leq (1 + o(1)) \cdot \sum_{n=1}^d \left ( \frac{d e}{n}\right )^n \frac{B^n 2^n}{n! \e^n}.
\ee
The largest element for the sum is obtained with $n_0^2 = d B \sqrt{T}/2 = o(d)$ in this region.  Plugging $n_o$, still using $\e = 4/\sqrt{T}$,
using a union bound on all $d$ elements of the sum, taking the logarithm of $M$, and adding the $d/2$ bound for the right term of 
\eref{eq:var_ub} gives the last region of the bound of Theorem~\ref{the:bayes_ub}, thus concluding its proof.

The only change from the last region to prove Theorem~\ref{the:bayes_ub1} is that now we first bound $M$ using the 
optimizing $n_o^2 = 2 B d / \e$, and then we find $\e$ that minimizes the joint bound.  Bounding all terms with a parameter $\e$ gives
\be
 \R \left (\A^*, S_T, \t^* \right ) \leq  (1 + o(1)) \cdot \left [ \log d + \sqrt{\frac{8 B d}{\e}} + \frac{Td\e^2}{32}
 \right ].
\ee
The minimizing $\e$ is then
\be
 \e = \frac{2^{9/5} B^{1/5}}{T^{2/5}d^{1/5}}.
\ee
Plugging this value in the bound gives
\be
 \R \left (\A^*, S_T, \t^* \right ) \leq  (1 + o(1)) \cdot \frac{5}{4} \cdot 2^{3/5} B^{2/5} d^{3/5} T^{1/5} = o(d).
\ee
This concludes the proof of Theorem~\ref{the:bayes_ub1}.

We note that the grid approach used for the $L_1$ bounds requires an algorithm to know the horizon $T$ in advance, to perform the
mixture.  However, in a \emph{strongly sequential\/} setting, when the horizon is not known \emph{a-priori\/}, we can start with some 
hypothesized horizon.  Once it is reached, the next horizon can be squared (or exponentiated with some exponent
$1 + \e$, for some small $\e$), and the current posterior (or prior of the next example), can
be split from each grid point to the new $T^\e$ nearest grid points.  This will incur some additional negligible regret relative to the
logarithmic regret in $T$, but can still achieve the current bounds.
\end{proof}

\section{Multi Label Lower Bound}
\label{ap:multi}

Similarly to Theorem~\ref{th:binary_lb}, we can state a regret lower bound for multi label logistic regression.  We only state the
bound for $L_\infty$.  Let $\t^{*(m)}$ be the projection of the parameter space on label $m$, i.e., a $d$-dimensional parameter space
for label $m$.
\begin{theorem}
\label{th:multi_lb}
Let $\|\t^{*(m)}\|_\infty \leq B$.
Fix an arbitrary $\e > 0$,  let $T\rightarrow \infty$, and let the number of labels $m = o((T/d)^{1-\e})$.
Then, for every algorithm $\A$ there exists a sequence $S_T$, for
which the regret for multi $m$ label logistic regression is lower bounded by
\be
 \label{eq:multi_mix_lb}
  \R (\A, S_T) \geq (1 - o(1)) \cdot \frac{d (m-1)}{2} \log \frac{T}{m \cdot d}.
\ee
\end{theorem}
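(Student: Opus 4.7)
The plan is to adapt the construction behind Theorem~\ref{th:binary_lb} (specifically the $L_\infty$ argument in its first regime) to the multi-label setting and then apply the fixed-$x^{*T}$ form of the redundancy-capacity result (Theorem~\ref{the:red-cap}). As before, I would partition the $T$ rounds into $d$ equal-length segments of $T/d$ rounds each, where in segment $i$ only the $i$-th feature is active ($x^*_{t,i}=1$, all other components zero). Inside segment $i$, the observation model collapses to estimating a single multinomial distribution over the $m$ labels, parameterized by the $m-1$ free coordinates $\t_i^{(1)},\ldots,\t_i^{(m-1)}$ (with one label fixed as the reference); crucially the coordinates of $\t^{*(\ell)}$ for $j\neq i$ are irrelevant inside this segment because the corresponding features are zero.

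Next I would place in each segment a product grid on its $m-1$ active coordinates, with spacing $\delta = \sqrt{md/T}^{\,1-\e}$ measured in label-probability space, giving roughly $(T/(md))^{(m-1)(1-\e)/2}$ grid points per segment. Taking the full parameter grid $\grid=\T_m$ as the Cartesian product of the $d$ local grids yields
\[
 \log|\grid| \;=\; (1-o(1))\cdot\frac{d(m-1)}{2}\log\frac{T}{md},
\]
matching the stated rate. Each grid point keeps at most one nonzero coordinate per label per segment of magnitude $O(\log(T/md))\leq B$, so the $L_\infty$ constraint $\|\t^{*(\ell)}\|_\infty\leq B$ is satisfied throughout, analogously to the $L_\infty$ construction in Appendix~\ref{ap:lb_proof}.

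I would then verify distinguishability: for two neighboring grid points in a segment, the KL divergence between the induced $m$-label per-round distributions is lower bounded by $c\delta^2$ (Pinsker), so over the $T/d$ rounds of a segment the probability of confusing them is at most $\exp(-c(T/(md))^{\e})$. A union bound over the polynomial-in-$T$ count of multinomial type classes per segment, the $d$ segments, and the $m-1$ free coordinates multiplies this by at most $\mathrm{poly}(T)\cdot d\cdot m$, and under the assumption $m=o((T/d)^{1-\e})$ we have $(T/(md))^{\e}\to\infty$, so $P_e\to 0$. Applying the fixed-$x^{*T}$ statement of Theorem~\ref{the:red-cap} with this $\grid$ delivers the lower bound $(1-o(1))\log|\grid|$, which is exactly \eqref{eq:multi_mix_lb}.

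The main obstacle will be the multinomial distinguishability step: in the binary case the $D(P_{\hat\t}\|P_\t)\geq 2\delta_e^2$ bound used in Appendix~\ref{ap:lb_proof} is immediate, but for $m$-ary labels one needs a local lower bound on the KL between neighboring multinomial parameters that is uniform across the grid, which amounts to controlling the Fisher information of the softmax parameterization away from degenerate corners. I would handle this by building the grid so that within each local step only one probability coordinate changes appreciably at a time, reducing the situation coordinate-wise to the binary KL estimate already used in the excerpt; the remaining union-bound bookkeeping is then routine under the assumed growth restriction on $m$.
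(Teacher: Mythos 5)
Your overall skeleton (partition $x^{*T}$ into $d$ single-feature segments, reduce each segment to a one-dimensional $m$-label estimation problem, take the product grid, union-bound the error over segments, and invoke the fixed-$x^{*T}$ form of Theorem~\ref{the:red-cap}) is the same as the paper's. The genuine gap is in the within-segment grid. You place a \emph{uniform} product grid with spacing $\delta=\sqrt{md/T}^{\,1-\e}$ in label-probability space and count $(T/(md))^{(m-1)(1-\e)/2}$ points, but the $m$ label probabilities live on a simplex, not a cube: once you restrict the product grid to $\sum_\ell p_\ell = 1$, the count drops by a factor of roughly $(m-1)!$, so $\log M_{\mathrm{seg}}$ picks up a $-(m-1)\log(m-1)$ correction and the per-segment bound degrades to about $\frac{m-1}{2}\log\frac{T}{m^3 d}$. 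For $m=O(1)$ this is absorbed in the $(1-o(1))$, but the theorem allows $m=o((T/d)^{1-\e})$; at the top of that range $(m-1)\log(m-1)$ dominates $\frac{m-1}{2}\log\frac{T}{md}$ entirely and your bound becomes vacuous. Your closing remark correctly flags the multinomial KL step as delicate, but the fix you propose (changing one probability coordinate at a time) addresses distinguishability, not this counting loss, which is the actual obstruction.

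The paper resolves exactly this by using a \emph{nonuniform} grid inside each segment, imported from Theorem~1 and Appendix~A of \cite{shamir06}: the local distinguishability scale of a multinomial coordinate $p_\ell$ after $n$ observations is $\sqrt{p_\ell(1-p_\ell)/n}$ rather than $\sqrt{1/n}$, so points can be packed much more densely near the low-probability faces of the simplex, and this extra density is precisely what recovers the full $\frac{m-1}{2}\log\frac{T}{md}$ count despite the simplex volume constraint. The price is that distinguishability for the nonuniform grid is harder to establish (hence the paper leans on the proof in Appendix~A of \cite{shamir06} rather than a one-line Pinsker bound). To repair your argument you would need to replace your uniform grid by such a Fisher-information-adapted grid (uniform in the $\sqrt{p_\ell}$ coordinates) and redo the large-deviations/union-bound step for it; the remaining bookkeeping over the $d$ segments is as you describe.
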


The $m-1$ factor is used to indicate that there are only $m-1$ free parameters per feature.  It does not affect the asymptotic behavior.  The proof
of Theorem~\ref{th:multi_lb} is similar to that of the first region of Theorem~\ref{th:binary_lb} in segmenting $x^{*T}$ into
$d$ segments, where in each only a single dimension exists.  However, handling each segment, especially if $m = \Omega(1)$,
is substantially more difficult.  The behavior for a single segment, however, appears as Theorem~1 in \cite{shamir06}.  For tight behavior it requires a nonuniform 
grid, which is described in the proof.  Because of the nonuniform grid, distinguishability (proven in Appendix~A in \cite{shamir06}) is more difficult to prove.  Borrowing
the proof from \cite{shamir06}, all is left to do is sum up the size of the grid over the $d$ segments, and apply the union bound on the $d$ segments for
proving distinguishability.  In each segment, there are $\log M_d = (1 - o(1)) 0.5 m \log T / (md)$ distinguishable parameters.  The normalization in
$d$ is because the segment is of length $T/d$ by the partitioning of $x^{*T}$.  The $m$ parameter appears due to effectively $L_1$ constraints imposed by the fact that the probabilities on all labels must sum to $1$.  As in the proof of Theorem~\ref{th:binary_lb}, for
$d  > T^{1-\e}$, we clip the analysis at $d_m = T^{1-\e}$.  As in Theorem~\ref{th:binary_lb}, we will have a threshold that depends on both
$d$ and $m$ over which the bound becomes negative and useless.  We can derive bounds, as those in the other regions of Theorem~\ref{th:binary_lb} for these cases, by taking the values of $d$ and $m$ that produce a maximal value of the bound, and lower bounding the regret for the larger $d$ and $m$ by the regret for the maximizing $d$ and $m$.

\end{document}